\documentclass[a4paper,fleqn]{cas-sc}

\usepackage[numbers]{natbib}
\usepackage{subcaption}
\usepackage{amsthm}

\def\tsc#1{\csdef{#1}{\textsc{\lowercase{#1}}\xspace}}
\tsc{WGM}
\tsc{QE}

\newtheorem{theorem}{Theorem}
\newtheorem{lemma}[theorem]{Lemma}
\newtheorem{definition}{Definition}[section]

\begin{document}
\let\WriteBookmarks\relax
\def\floatpagepagefraction{1}
\def\textpagefraction{.001}

\shorttitle{Windowed Fourier Propagator}    

\shortauthors{Y. Cai et~al}  

\title [mode = title]{Windowed Fourier Propagator: A Frequency-Local Neural Operator for Wave Equations in Inhomogeneous Media}  



%

\author[1]{Yiyang Cai}[orcid=0009-0001-1519-6910]


\credit{Conceptualization, Methodology,  Formal analysis, Writing}
\affiliation[1]{organization={School of Mathematical Sciences, Fudan University},    
            city={Shanghai},
            postcode={200433}, 
            country={China}}

\author[1]{Zixuan Qiu}
\credit{Data curation}

\author[2]{Yunlu Shu}
\credit{Investigation}

\author[1]{Jiamao Wu}
\credit{Data curation}

\author[1,3]{Yingzhou Li}
\cormark[1]
\ead{yingzhouli@fudan.edu.cn}
\credit{Supervision, Resources, Project administration}

\author[2,3]{Tianyu Wang}
\ead{wangtianyu@fudan.edu.cn}
\cormark[1]
\credit{Supervision}

\author[1,2,3]{Xi Chen}
\ead{xi_chen@fudan.edu.cn}
\cormark[1]
\credit{Supervision,Validation, Conceptualization, Funding acquisition}

\affiliation[2]{organization={Shanghai Center for Mathematical Sciences, Fudan University},    
            city={Shanghai},
            postcode={200438}, 
            country={China}}

\affiliation[3]{organization={Center for Applied Mathematics, Fudan University},    
            city={Shanghai},
            postcode={200438}, 
            country={China}}

\cortext[1]{Corresponding author}


\begin{abstract}
Wave equations are fundamental to describing a vast array of physical phenomena, yet their simulation in inhomogeneous media poses a computational challenge due to the highly oscillatory nature of the solutions. To overcome the high costs of traditional solvers, we propose the Windowed Fourier Propagator (WFP), a novel neural operator that efficiently learns the solution operator.  
The WFP's design is rooted in the physical principle of frequency locality, where wave energy scatters primarily to adjacent frequencies. By learning a set of compact, localized propagators, each mapping an input frequency to a small `window' of outputs, our method avoids the complexity of dense interaction models and achieves computational efficiency. Another key feature is the explicit preservation of superposition, which enables remarkable generalization from simple training data (e.g., plane waves) to arbitrary, complex wave states. We demonstrate that the WFP provides an explainable, efficient and accurate framework for data-driven wave modeling in complex media.
\end{abstract}



\begin{highlights}
\item Achieves efficiency via windowed, frequency-local interactions.
\item Preserves linearity for generalization to unseen states. 
\item Frequency-domain operation enables resolution-invariant learning. 
\item Dual-pathway architecture separates medium and wave state processing.

\end{highlights}

\begin{keywords}
Wave equation
    \sep High frequency
    \sep Neural network
    \sep Frequency locality
\end{keywords} 

\maketitle

\section{Introduction}\label{introduction}
Wave equations describe various wave phenomena across science and technology, including acoustic, electromagnetic, elastic, and gravitational waves. The fast computation of wave equations holds significant value for quantitative analysis, prediction and detection tasks in both theoretical research and engineering applications. 

To address the challenge of resolving highly oscillatory solutions, a variety of advanced numerical methods have been developed. Among them, spectral methods, and particularly pseudo-spectral methods~\cite{orszag1971numerical,trefethen2000spectral}, 
are noted for their exponential convergence properties when applied to smooth solutions. This characteristic allows them to achieve high accuracy with fewer grid points compared to local methods such as finite elements or finite differences.
 However, all these methods are bound by a classic trade-off: explicit schemes require small time steps due to CFL conditions, while stable implicit schemes incur a high computational cost per step. This fundamental limitation motivates the search for new computational paradigms.

Another important line of research focuses on designing more efficient propagators in the frequency or phase space. For instance, Qian and Ying \cite{qian2010fast} proposed a method based on the fast Gaussian wavepacket transform, which efficiently solves the wave equation with a complexity of $O(N^d \log N)$ by propagating Gaussian wavepackets in phase space. Both pseudo-spectral methods and these advanced techniques underscore the core advantage of operating in the frequency or phase domain. Building upon this foundation, the integration of frequency-domain principles with modern deep learning architectures opens up new avenues for learning the solution operator directly, leading to novel solvers that merge insights from spectral methods with the power of machine learning.

\subsection{Related Works}
\label{sec:related}
Our work is positioned within the broader landscape of scientific machine learning. There are many related works, which are categrized into a few groups below.

\textbf{Physics-Informed Neural Networks (PINNs).}
PINNs~\cite{raissi2019physics} are a foundational deep learning approach for solving PDEs. The core methodology embeds the governing physical laws directly into the loss function by minimizing the PDE residual, a process enabled by automatic differentiation. 
This mesh-free technique offers remarkable flexibility for problems where training data is specified solely on the domain boundaries or as sparse, unstructured measurements within the interior.
 The PINN framework has proven highly versatile, with successful extensions to complex applications such as domain decomposition~\cite{jagtap2020extended}, conservation laws~\cite{jagtap2020conservative}, and high-speed flows~\cite{mao2020physics}. Supported by robust software libraries~\cite{lu2021deepxde} and a growing body of theoretical analysis on its training dynamics~\cite{wang2021understanding,xu2020frequency}, PINNs have become a versatile tool in scientific computing.

\textbf{Deep Operator Networks (DeepONet).}
DeepONet~\cite{lu2021learning} are a foundational framework for learning operators between function spaces, grounded in universal approximation theorems~\cite{chen1995universal,li2020variational}. The architecture uniquely employs a dual-network structure: a ``branch net'' processes the input function, while a ``trunk net'' processes the output domain coordinates. Their outputs are combined to predict the solution pointwise. This versatile design has been successfully extended and applied to a wide array of problems in science and engineering~\cite{anandkumar2020neural, brandstettermessage, jin2022mionet,montes2021accelerating}, including in physics-informed~\cite{li2024physics, wang2021learning} and frequency-domain contexts~\cite{zhu2023fourier}.

\textbf{CNN-based Models.}
Convolutional Neural Networks (CNNs) serve as a powerful foundation for data-driven PDE solvers due to their inherent ability to capture local spatial patterns. Architectures like PDE-Net~\cite{Li2020,long2019pde,long2018pde} leverage this by designing convolutional filters that explicitly learn discretized differential operators from data. 
Beyond learning local operators, CNNs have also been adapted to approximate non-local solution operators (e.g., Green's functions) by leveraging mathematical structures such as hierarchical matrices. 
Specifically, Multiscale Neural Networks~\cite{fan2019mnnh2,fan2019mnn,feliu2020meta} utilize multi-scale wavelet decompositions to capture sparse interactions at different scales using local neural networks. 
In the context of solving PDEs, these models excel at learning spatial structures and have demonstrated strong performance in various simulation tasks.

\textbf{Fourier Neural Operator (FNO).} 
The Fourier Neural Operator (FNO~\cite{li2023solving,lifourier,li2024physics}) is a prominent deep learning framework for solving PDEs. Its core architectural innovation is to parametrize the integral kernel directly in the frequency domain. This is achieved by applying a Fourier transform, performing a linear transformation on the lower-frequency modes, and then applying an inverse Fourier transform. This frequency-domain operation allows FNO to efficiently model global dependencies and learn resolution-invariant solution operators. To handle parametric PDEs, spatially varying coefficients are typically incorporated as additional input channels. For time-dependent problems, the FNO learns an autoregressive mapping between consecutive time steps. The architecture has demonstrated strong performance and computational efficiency across a range of benchmark problems~\cite{guibas2021adaptive,li2023fourier,tranfactorized}.

\subsection{The Method and Contributions}

In this paper, we introduce the Windowed Fourier Propagator (WFP), 
a novel neural operator framework designed to overcome the 
significant computational challenges in solving time-domain wave equations, 
particularly in high-frequency regimes and inhomogeneous media. 
The WFP architecture is not a black-box model, 
instead, its design is directly motivated by a rigorous mathematical analysis of 
the underlying wave physics, allowing it to achieve a unique blend of efficiency, 
accuracy, and physical consistency.

We shall demonstrate that the solution to the wave equation exhibits a phenomenon, termed \textbf{frequency locality}, see theorem~\ref{thm:main} for precise quantitative estimates. 
Specifically, if an initial single-frequency wave is prescribed for the wave equation in inhomogeneous media, the singularities will be propagated along Hamiltonian flows in the phase space and the energy transfer is predominantly confined to a 
small frequency neighbourhood around the initial frequency. This phenomenon can be viewed as the finite speed propagation of waves in the frequency domain.  
Consequently, a local model capturing only the interactions within a small frequency neighborhood is sufficient to characterize the overall evolution of the system.

We designed the WFP to explicitly exploit frequency locality. 
The idea is to learn a set of local evolution operators (or propagators) in the frequency domain, 
rather than a single global one. 
The overall workflow of our method can be summarized as follows:
\begin{enumerate}
    \item \textbf{Input Decomposition and Feature Extraction:} 
The initial conditions and the medium velocity field are transformed into the frequency domain. We then extract a sparse set of dominant initial frequencies, along with a compact, resolution-invariant feature vector that characterizes the medium.
    \item \textbf{Localized Evolution via a Neural Network:} A core neural network module learns
    the mapping from a driving frequency component and the medium features to a local propagator. 
    This propagator is a small, learnable dictionary describing how the driving frequency transfers
    its energy into its immediate frequency neighborhood, or `window'.
    \item \textbf{Synthesis of the Evolved Spectrum:} The evolution of each driving frequency component is computed by scaling its corresponding local propagator with its initial complex amplitude. The complete Fourier coefficients of the final solution are then synthesized by aggregating contributions from all initial frequency windows.
    \item \textbf{Final Reconstruction:} Finally, a single inverse  Fourier transform 
    reconstructs the full-field solution in the spatial domain.
\end{enumerate}

This structured, physics-informed approach allows the WFP's core neural network component to operate with a computational complexity of $O(N \cdot r^d)$, where $N$ is the number of dominant input frequencies and $r$ is the small window radius. This complexity, which does not account for the initial and final Fourier transforms, is determined by hyperparameters $r$ tuned to the physical nature of the problem and the desired evolution time.

The contributions of this framework are threefold:

\begin{enumerate}
    \item \textbf{A Computationally Efficient and Resolution-Invariant Architecture.}
    
The WFP learns a set of localized propagators, each directly mapping a dominant frequency component to its evolved state within mitigate the strict CFL constraints of traditional explicit solvers. Moreover, the core network operates on fixed-size frequency-domain representations, decoupling computational cost from the spatial grid resolution. This property makes the WFP a resolution-invariant and highly scalable framework, as a model trained on a coarse grid can be seamlessly deployed on high-resolution domains without retraining.

    \item \textbf{A Principled and Generalizable Operator via Explicit Physical Modelling.}

   A common strategy for parametric neural operators is to concatenate the initial state and medium properties as input channels. Our framework explores a different approach by decoupling these components to explicitly model the underlying physics. The WFP learns an evolution operator that is conditioned solely on the medium's frequency-domain features. This operator is then applied linearly to the initial state by scaling it with the initial wave amplitudes, a design that preserves the principle of superposition. This provides a powerful inductive bias, leading to superior generalization. By learning an operator independent of the initial amplitudes, the WFP can be trained on a simple basis of plane waves and yet accurately predict the evolution of complex, unseen initial states, significantly enhancing its data efficiency and robustness.

    \item \textbf{A Highly Parallelizable and Storage-Efficient Network.}
    
The training of end-to-end neural operators often involves generating and storing datasets of high-resolution, full-field snapshots. Our work focuses on refining this data handling and computational workflow for greater efficiency. Our framework is designed to be adaptive by operating on compact vectors representing only the dominant Fourier modes of the initial state, rather than full spatial grids. Building on this, we treat the evolution of each dominant mode as an independent sub-problem. This approach offers several advantages: dataset storage becomes minimal, the computational cost naturally adapts to the input's complexity, and the entire process is parallelizable. This structure is perfectly suited for modern hardware accelerators, enabling massive throughput for large-scale, multi-frequency problems.

\end{enumerate}

\begin{figure}[h]
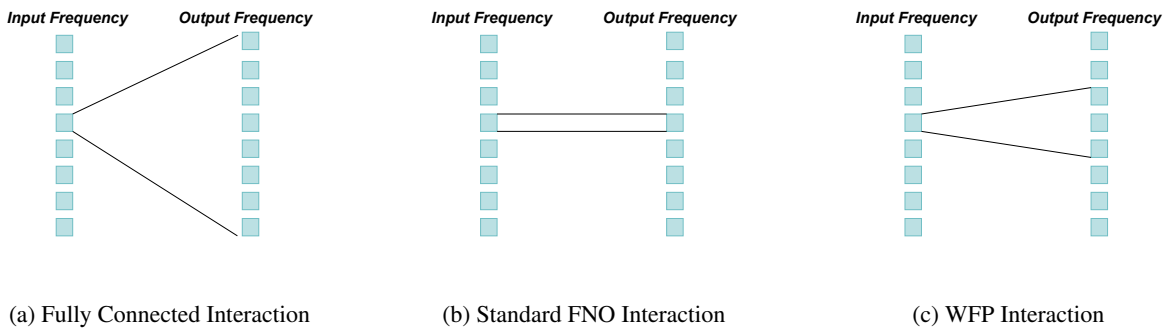

    \centering
    \begin{subfigure}[b]{0.32\textwidth}
        \centering
        \includegraphics[page=2,width=\textwidth]{figures/essay_plot.pdf}
        \caption{Fully Connected Interaction}
        \label{fig:linear_interaction}
    \end{subfigure}
    \hfill
    \begin{subfigure}[b]{0.32\textwidth}
        \centering
        \includegraphics[page=4,width=\textwidth]{figures/essay_plot.pdf}
        \caption{Standard FNO Interaction}
        \label{fig:elementfno_interaction}
    \end{subfigure}
    \hfill
    \begin{subfigure}[b]{0.32\textwidth}
        \centering
        \includegraphics[page=3,width=\textwidth]{figures/essay_plot.pdf}
        \caption{WFP Interaction}
        \label{fig:wfp_interaction}
    \end{subfigure}
    
    \raggedright

    \caption{\textbf{Comparison of Frequency Interaction Patterns:} 
    The left panel shows the fully connected network with full frequency coupling, 
    the middle panel illustrates the standard FNO with independent and elementwise mode processing, 
    the right panel demonstrates the WFP's localized windowed interactions.}
    \label{fig:frequency_interactions}
\end{figure}
\textit{\textbf{Paper organization. }} The rest of this paper is organized as follows: Section~\ref{sec:problem_formulation} reviews some properties on wave equations. Section~\ref{sec:analysis} presents theoretical analysis of frequency locality. Section~\ref{sec:wfp_methodology} introduces our Windowed Fourier Propagator (WFP) method. Section~\ref{sec:experiments} validates WFP through numerical experiments. We conclude the paper with a discussion in Section~\ref{sec:conclusion}.





\section{The wave equation}\label{sec:problem_formulation}

Let $\Omega := \mathbb{T}^d = (\mathbb{R}/2\pi\mathbb{Z})^d$ be the $d$-dimensional torus (equivalent to the domain $[0, 2\pi]^d$ with periodic boundary conditions). We consider the wave equation in $\Omega$:
\begin{equation}\label{eq:wave_equation_homogeneous} \left\{
	\begin{aligned}
  \partial^2_t u(\mathbf{x},t) - \nabla \cdot \left( c(\mathbf{x})^2 \nabla u(\mathbf{x},t) \right) &= 0, && \text{for } (\mathbf{x},t) \in \Omega \times \mathbb{R}_+\\
u(\mathbf{x},0) &= f(\mathbf{x}), && \text{for } \mathbf{x} \in \Omega   \\ 
 \partial_t u(\mathbf{x},0) &= g(\mathbf{x}), && \text{for } \mathbf{x} \in \Omega  
	\end{aligned}\right. ,
\end{equation}
where $u(\mathbf{x},t)$ is the scalar wave field, $c(\mathbf{x})$ is the spatially varying wave speed, and $f(\mathbf{x})$ and $g(\mathbf{x})$ represent the initial displacement and velocity profiles, respectively.

\subsection{The principle of superposition}
 
A fundamental property of linear wave equations is that they adhere to the principle of superposition. This implies that the solution to the wave equation with a linear combination of initial conditions is equal to the linear combination of the solutions corresponding to each individual initial condition.

Any function $h \in L^2(\Omega)$ can be expanded in terms of the standard orthonormal Fourier basis. We define the normalized basis functions as:
\begin{equation}
    \phi_{\mathbf{k}}(\mathbf{x}) := (2\pi)^{-d/2} e^{\imath \mathbf{k} \cdot \mathbf{x}}, \quad \mathbf{k} \in \mathbb{Z}^d.
\end{equation}
These functions satisfy the orthonormality relation $\langle \phi_{\mathbf{k}}, \phi_{\mathbf{m}} \rangle = \delta_{\mathbf{k},\mathbf{m}}$. Consequently, the Fourier series representation of $h$ is given by:
\begin{equation}\label{eqn : Fourier series}
	h(\mathbf{x}) = \sum_{\mathbf{k} \in \mathbb{Z}^d} \hat{h}_{\mathbf{k}} \phi_{\mathbf{k}}(\mathbf{x}),
\end{equation} 
where the Fourier coefficient $\hat{h}_{\mathbf{k}}$ is defined as the inner product of $h$ with the basis function:
\begin{equation}\label{eqn : Fourier coefficient}
    \hat{h}_{\mathbf{k}} := \langle h, \phi_{\mathbf{k}} \rangle = \int_{\Omega} h(\mathbf{x}) \overline{\phi_{\mathbf{k}}(\mathbf{x})}\,\mathrm{d}\mathbf{x}.
\end{equation}

By the principle of superposition, solving Equation~\eqref{eq:wave_equation_homogeneous} for general initial data $(f, g)$ reduces to analyzing the evolution of individual Fourier modes. Specifically, it suffices to consider the case with a single-frequency plane wave initial condition:
\begin{equation}
    u(\mathbf{x}, 0) = e^{\imath \mathbf{k}_0 \cdot \mathbf{x}}, \quad \partial_t u(\mathbf{x}, 0) = 0,
\end{equation}
where $\mathbf{k}_0 \in \mathbb{Z}^d$ is termed the \textbf{driving frequency}. The linearity of the equation ensures that the full solution can be reconstructed by combining the evolved fields of these single-frequency components weighted by their corresponding Fourier coefficients $\hat{f}_{\mathbf{k}}$ and $\hat{g}_{\mathbf{k}}$.

\subsection{Propagation of singularities}
The singularities of waves amount to the energies of waves. Each Fourier (frequency) component in Equation~\eqref{eqn : Fourier series} is thought of as an energy carrier and the Fourier coefficient Equation~\eqref{eqn : Fourier coefficient} measures the magnitude of singularities at $\mathbf{m}$. The modes with non-zero (zero) amplitude are called excited (unexcited) modes.

The classical theorem on propagation of singularities, due to \cite{ FIO2,FIO1}, states that the singularities (nonsmoothness) of waves, in the phase space, propagate, as time evolves, along the Hamiltonian flows of the principal symbol of the wave operator. 
In our case, the wave operator is $P := \partial_t^2 - \nabla \cdot (c^2(\mathbf{x}) \nabla)$, which has the principal symbol form:
\[
\sigma(P) := - \tau^2 + c^2(\mathbf{x}) |\boldsymbol{\xi}|^2 , \quad \mbox{for $(t, \mathbf{x}, \tau, \boldsymbol{\xi}) \in T^\ast  \mathbb{R}^{d + 1}$}.
\]  
Here, $T^\ast \mathbb{R}^{d + 1}$ denotes the phase space, where each point consists of a spacetime coordinate $(t, \mathbf{x})$ and its corresponding frequency-wavevector pair $(\tau, \boldsymbol{\xi})$. Hamiltonian flows are the integral curves along the Hamiltonian vector field of $\sigma(P)$, which, in local coordinates, reads 
\begin{equation}\label{eqn : Hamilton v.f.}
H_{\sigma(P)} : = -2\tau \partial_{t} +  2 c^2(\mathbf{x}) \boldsymbol{\xi} \cdot \partial_{\mathbf{x}} - 2 |\boldsymbol{\xi}|^2  c(\mathbf{x})   \nabla c(\mathbf{x})\cdot \partial_{\boldsymbol{\xi}}.
\end{equation}
Owing to the presence of $\nabla c(\mathbf{x})$, the singularities propagate not only in the physical space but also in the frequency domain. Namely, the energies transfer among Fourier modes.

Governed by Equation~\eqref{eqn : Hamilton v.f.}, any Hamiltonian flow propagates, in the phase space $T^\ast  \mathbb{R}^{d}$, at the velocity
\[  
\frac{\mathrm{d}(\mathbf{x}, \boldsymbol{\xi})}{\mathrm{d}t}  = \frac{1}{\tau} \left( c^2(\mathbf{x}) \boldsymbol{\xi}, -  |\boldsymbol{\xi}|^2  c(\mathbf{x})   \nabla c(\mathbf{x})\right).
\] 
Since $\tau^2 = c^2(\mathbf{x})|\boldsymbol{\xi}|^2 $ on the flow, this implies a well-known fact, that the waves propagate in the physical space at a finite speed $c(\mathbf{x})$. Moreover, the evolution of the frequency $\boldsymbol{\xi}$ (i.e. the transferring among Fourier modes) obeys 
\[
\frac{\mathrm{d}\boldsymbol{\xi}}{\mathrm{d}t} = \pm \nabla c(\mathbf{x}) |\boldsymbol{\xi}|.
\] 
For any initial value $\boldsymbol{\xi}_0$, the propagation speed of $\boldsymbol{\xi}$ in the frequency domain is dominated by $\nabla c$ in finite times.

We may build some intuition with a numerical experiment. A 1D single-frequency plane wave, $u(x,0) = \sin(2\pi k_x x)$, is propagated through a weakly inhomogeneous medium using a numerical solver. We visualize the evolution of the Fourier spectrum  of the solution over time in Figure~\ref{fig:freq_locality_3d}. The experiment reveals two crucial phenomena: first, at any given time, the energy remains sharply concentrated in the frequency neighborhood of the initial mode; second, this region of excited frequencies gradually expands over time. This observation of energy spreading at a finite speed in the frequency domain is the cornerstone of our proposed method.

\begin{figure}[h!]
    \centering
    \includegraphics[width=0.4\textwidth]{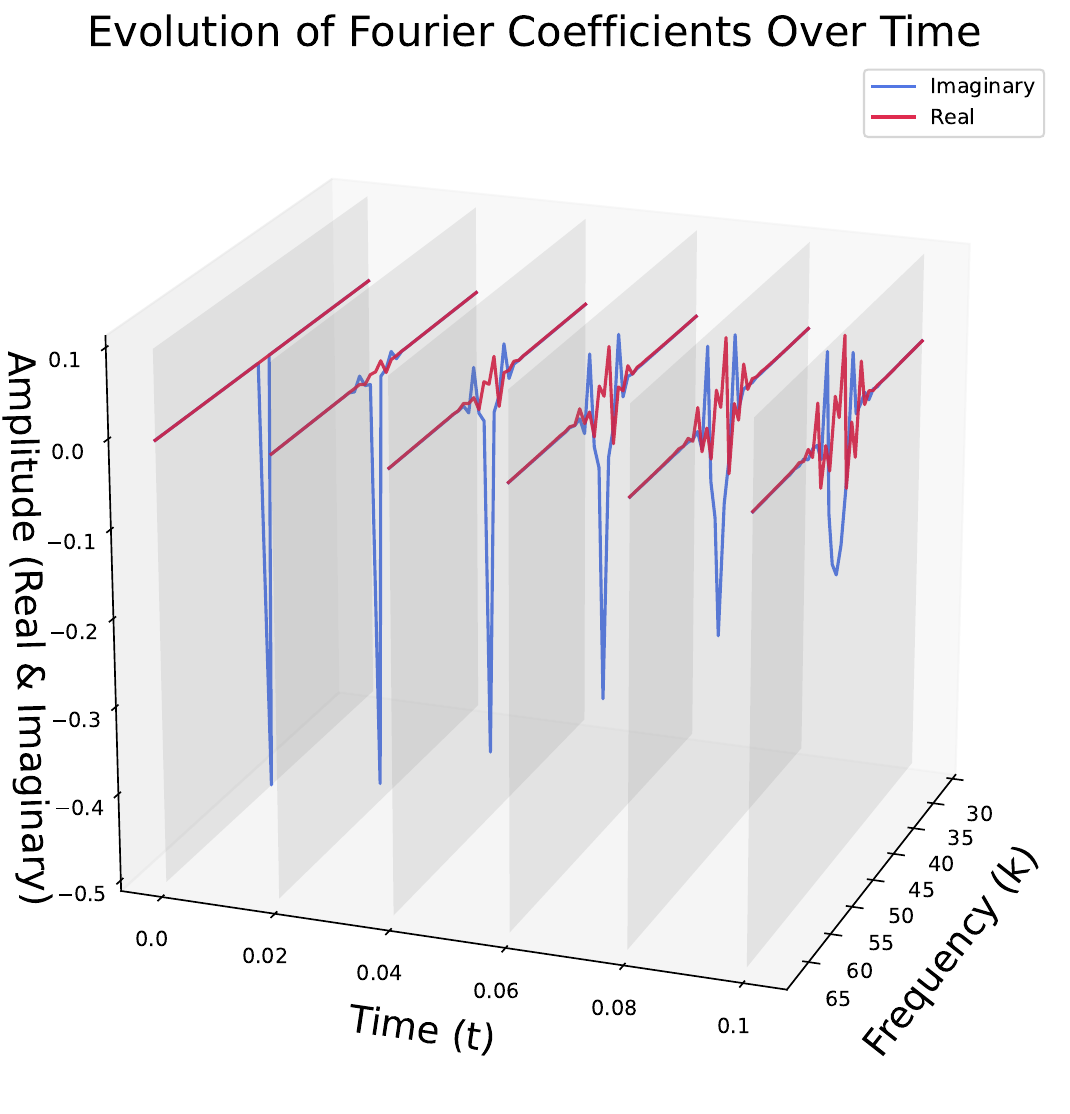}
    \caption{Numerical evidence of frequency locality. The plot shows the Fourier spectrum of the wave solution at different time slices. Initially, only a single frequency is excited (back plane). As time progresses, energy spreads to adjacent frequencies, but remains highly localized in a growing area.}
    \label{fig:freq_locality_3d}
\end{figure}

\section{Energy transfer among frequency components}\label{sec:analysis}

This section delves into a pivotal question: 
how does the energy of a single-frequency wave 
transfer among other frequency modes as it propagates through an inhomogeneous medium?
Motivated by the numerical phenomenon, we establish  quantitative estimates for the spreading of amplitudes among Fourier modes.

\subsection{Problem Setup and Preliminaries}

We investigate the divergence-form wave equation on the $d$-dimensional torus $\Omega = \mathbb{T}^d \cong [0, 2\pi]^d$:
\begin{equation} \label{eq:wave_eq}
    \partial_t^2 u - \nabla \cdot (c^2(\mathbf{x}) \nabla u) = 0, \quad u(\mathbf{x}, 0) = \phi_{\mathbf{k}_0}(\mathbf{x}), \quad \partial_t u(\mathbf{x}, 0) = 0.
\end{equation}
The medium coefficient is $c^2(\mathbf{x})=1+\epsilon p(\mathbf{x})$, where $p$ is a smooth periodic function with zero mean and $\|p\|_\infty \le 1$. We assume bounds $m \le c^2(\mathbf{x}) \le M$, specifically $m = 1-\epsilon$ and $M=1+\epsilon$. The initial condition is a single frequency mode $\mathbf{k}_0$, where $\phi_{\mathbf{n}}(\mathbf{x}) = (2\pi)^{-d/2} e^{i \mathbf{n} \cdot \mathbf{x}}$ denotes the standard Fourier basis.

Let $\hat{u}_{\mathbf{n}}(t)$ be the Fourier coefficient of $u(\cdot, t)$, defined as 
\begin{equation}
    \hat{u}_{\mathbf{n}}(t) = \langle u(\cdot, t), \phi_{\mathbf{n}} \rangle = \frac{1}{(2\pi)^{d/2}} \int_{\mathbb{T}^d} u(\mathbf{x}, t) e^{-i \mathbf{n} \cdot \mathbf{x}} \, \mathrm{d}\mathbf{x}.
\end{equation} Transforming \eqref{eq:wave_eq} into the frequency domain yields the evolution equation for each mode $\mathbf{n} \in \mathbb{Z}^d$:
\begin{equation} \label{eq:fourier_evolution}
    \frac{\mathrm{d}^2\hat{u}_{\mathbf{n}}}{\mathrm{d}t^2} + |\mathbf{n}|^2 \hat{u}_{\mathbf{n}}
    = -\epsilon \sum_{\mathbf{j} \neq \mathbf{n}} (\mathbf{n} \cdot \mathbf{j}) \hat{p}_{\mathbf{n}-\mathbf{j}} \hat{u}_{\mathbf{j}}(t) =: R_{\mathbf{n}}(t).
\end{equation}
The right-hand side $R_{\mathbf{n}}(t)$ represents the frequency coupling induced by the medium perturbation.

\subsection{Main Result}

We analyze the solution as a perturbation of the dominant mode. Let $u_{\mathbf{k}_0}^{(0)}(\mathbf{x}, t) = \cos(|{\mathbf{k}_0}| t )\phi_{\mathbf{k}_0}(\mathbf{x})$ be the solution in the unperturbed medium ($\epsilon=0$).

\begin{theorem}[Frequency Locality] \label{thm:main}
Consider the wave evolution over $t \in [0,T]$. For any initially unexcited frequency $\mathbf{n}_1 \neq \mathbf{k}_0$, we define the \textbf{first-order approximate solution} $\hat{u}_{\mathbf{n}_1}^{\text{approx}}(t)$ as the solution to the linearized inhomogeneous equation driven solely by the dominant mode:
\begin{equation} \label{eq:approx_ode}
    \frac{\mathrm{d}^2}{\mathrm{d}t^2} \hat{u}_{\mathbf{n}_1}^{\text{approx}} + |\mathbf{n}_1|^2 \hat{u}_{\mathbf{n}_1}^{\text{approx}} = -\epsilon (\mathbf{n}_1 \cdot \mathbf{k}_0) \hat{p}_{\mathbf{n}_1-\mathbf{k}_0} \hat{u}_{\mathbf{k}_0}^{(0)}(t),
\end{equation}
with zero initial conditions. By Duhamel's principle, this is explicitly given by:
\begin{equation}\label{eqn : u_approx}
    \hat{u}_{\mathbf{n}_1}^{\text{approx}}(t) := -\frac{\epsilon}{|{\mathbf{n}_1}|} ({\mathbf{n}_1} \cdot {\mathbf{k}_0}) \hat{p}_{\mathbf{n}_1-\mathbf{k}_0} \int_0^t \sin\left(|{\mathbf{n}_1}|(t-s)\right) \hat{u}_{\mathbf{k}_0}^{(0)}(s) \, \mathrm{d}s.
\end{equation}
If $\epsilon \ll 1$ and $\eta := \epsilon T |\mathbf{k}_0| \ll 1$, then the error is a higher-order small quantity:
\begin{equation}
    |\hat{u}_{\mathbf{n}_1}(t) - \hat{u}_{\mathbf{n}_1}^{\text{approx}}(t)| = o(\eta),
\end{equation}
while the magnitude of the approximation is $|\hat{u}_{\mathbf{n}_1}^{\text{approx}}(t)| \sim |\hat{p}_{\mathbf{n}_1-\mathbf{k}_0}| \eta$.
\end{theorem}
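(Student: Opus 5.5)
We will compare the exact Fourier coefficient with its approximation by writing both through Duhamel's formula. Equation~\eqref{eq:fourier_evolution}, with $\hat u_{\mathbf n_1}(0)=\dot{\hat u}_{\mathbf n_1}(0)=0$ for $\mathbf n_1\neq\mathbf k_0$, gives
\[
\hat u_{\mathbf n_1}(t)=\frac{1}{|\mathbf n_1|}\int_0^t \sin\bigl(|\mathbf n_1|(t-s)\bigr)R_{\mathbf n_1}(s)\,\mathrm ds .
\]
(For $\mathbf n_1=0$ the kernel is replaced by $t-s$, and $R_0\equiv0$ anyway.) Subtracting \eqref{eqn : u_approx}, the error is the Duhamel integral of
\[
R_{\mathbf n_1}(s)+\epsilon(\mathbf n_1\cdot\mathbf k_0)\hat p_{\mathbf n_1-\mathbf k_0}\hat u^{(0)}_{\mathbf k_0}(s)
=-\epsilon(\mathbf n_1\cdot\mathbf k_0)\hat p_{\mathbf n_1-\mathbf k_0}\bigl(\hat u_{\mathbf k_0}-\hat u^{(0)}_{\mathbf k_0}\bigr)-\epsilon\sum_{\mathbf j\neq\mathbf n_1,\mathbf k_0}(\mathbf n_1\cdot\mathbf j)\hat p_{\mathbf n_1-\mathbf j}\hat u_{\mathbf j}.
\]
The kernel $\sin(|\mathbf n_1|(t-s))/|\mathbf n_1|$ is bounded by $t-s\le T$. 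Hence the error is at most $\epsilon T^2|\mathbf n_1|$ times a weighted sup over $s$ of the bracket. Throughout, $\mathbf n_1$ is treated as fixed with $|\mathbf n_1|\lesssim|\mathbf k_0|$, i.e.\ inside the window; this is what lets $\epsilon T|\mathbf n_1|$ be replaced by $\eta$.

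\textbf{Step 1: energy control of the non-dominant modes.} Let $w:=u-u^{(0)}_{\mathbf k_0}$. It solves $\partial_t^2w-\nabla\cdot(c^2\nabla w)=\epsilon\nabla\cdot(p\nabla u^{(0)}_{\mathbf k_0})$ with zero initial data. The forcing has $L^2$ norm $\lesssim\epsilon|\mathbf k_0|^2$, using smoothness of $p$ so that $\|\nabla p\|_\infty$ is $O(1)$. The energy $E(t)=\tfrac12\int(|\partial_tw|^2+c^2|\nabla w|^2)$ is uniformly equivalent to the $H^1$ seminorm since $m\le c^2\le M$. The energy identity then gives $\sqrt{E(t)}\lesssim\epsilon|\mathbf k_0|^2 t$. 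I expect a sharper bound by writing the forcing as $\epsilon\nabla\cdot F$ and integrating by parts in time. That route uses the oscillation $\cos(|\mathbf k_0|s)$ of $u^{(0)}$ and should give $\|\partial_t w\|_{L^2}+\|\nabla w\|_{L^2}\lesssim\epsilon|\mathbf k_0|\,(1+t|\mathbf k_0|)$. Combined with Poincaré-type control on nonzero modes, this yields $\sup_{s\le T}\|w(s)\|_{L^2}\lesssim\eta$ up to harmless factors. This bounds both $|\hat u_{\mathbf k_0}-\hat u^{(0)}_{\mathbf k_0}|$ and $\sum_{\mathbf j\neq\mathbf k_0}|\mathbf j|^2|\hat u_{\mathbf j}|^2$.

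\textbf{Step 2: bound the two error terms.} The first term is $\epsilon|\mathbf n_1\cdot\mathbf k_0||\hat p|\cdot O(\eta)$. Integrated against the kernel, it gives $O(\eta)\cdot(\epsilon T|\mathbf k_0|)\cdot T|\mathbf n_1|$-type factors, which are $O(\eta^2)=o(\eta)$ in the scaling regime. For the second term, apply Cauchy--Schwarz: $\sum_{\mathbf j}|\mathbf n_1\cdot\mathbf j||\hat p_{\mathbf n_1-\mathbf j}||\hat u_{\mathbf j}|\le|\mathbf n_1|\,\|\hat p\|_{\ell^2}\bigl(\sum|\mathbf j|^2|\hat u_{\mathbf j}|^2\bigr)^{1/2}$. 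The last factor is $\nabla w$ restricted away from $\mathbf k_0$, which Step~1 bounds by $O(\eta|\mathbf k_0|)$ after the time integration. This again gives an $\epsilon T\cdot\eta$-type, i.e.\ $O(\eta^2)$, contribution.

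\textbf{Step 3: magnitude of the approximation.} Compute \eqref{eqn : u_approx} explicitly using $\hat u^{(0)}_{\mathbf k_0}(s)=\cos(|\mathbf k_0|s)$:
\[
\int_0^t\sin(a(t-s))\cos(bs)\,\mathrm ds=\frac{a\bigl(\cos bt-\cos at\bigr)}{b^2-a^2}
\]
for $a\neq b$, and $\tfrac t2\sin(at)$ when $a=b$. In the near-resonant regime $|\mathbf n_1|\approx|\mathbf k_0|$ within the window, this is of size $\sim t$. The prefactor $\epsilon|\mathbf n_1\cdot\mathbf k_0|/|\mathbf n_1|\sim\epsilon|\mathbf k_0|$ then gives $|\hat u^{\text{approx}}_{\mathbf n_1}|\sim|\hat p_{\mathbf n_1-\mathbf k_0}|\epsilon T|\mathbf k_0|=|\hat p_{\mathbf n_1-\mathbf k_0}|\eta$. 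Off resonance, the same formula gives an upper bound of this order, so $\sim$ should be read as an upper bound there.

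\textbf{Main obstacle.} The crux is Step~1: obtaining an $L^2$/energy bound for $w$ of size $O(\eta)$ uniformly in $|\mathbf k_0|$, rather than the crude $\epsilon|\mathbf k_0|^2T$. The naive energy estimate loses a factor of $|\mathbf k_0|$. I would first try the time integration by parts exploiting $\partial_t^2u^{(0)}=-|\mathbf k_0|^2u^{(0)}$. If that fails, I would fall back on an iterated Duhamel (Born series) estimate in the weighted $\ell^2$ norm $\sum|\mathbf j|^2|\hat u_{\mathbf j}|^2$, closed by Grönwall under $\eta\ll1$.
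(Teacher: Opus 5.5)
There is a genuine gap, and it comes from how you bound the Duhamel kernel. Replacing $\sin(|\mathbf n_1|(t-s))/|\mathbf n_1|$ by $t-s\le T$ discards a factor $T|\mathbf n_1|\approx T|\mathbf k_0|=\eta/\epsilon$. That factor is large exactly in the high-frequency regime the theorem is about ($\epsilon\ll\eta\ll 1$). With that bound, your first error term is $O(\eta^2\,T|\mathbf n_1|)$, not $O(\eta^2)$: you write the factor $T|\mathbf n_1|$ and then drop it. The second term is $\epsilon T^2|\mathbf n_1|\,\|\hat p\|_{\ell^2}\sup_s\|\nabla w(s)\|_{L^2}\sim\eta^2T|\mathbf k_0|$. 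Both are $o(\eta)$ only under the extra assumption $\eta^2\ll\epsilon$ (i.e.\ $T|\mathbf k_0|\ll\epsilon^{-1/2}$), which the theorem does not make. Restricting to $|\mathbf n_1|\lesssim|\mathbf k_0|$ does not help.

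Your ``main obstacle'' is an artifact of this loss, and it cannot be removed on the $w$ side. The scattered modes near $|\mathbf k_0|$ carry amplitude $\sim|\hat p_{\mathbf n_1-\mathbf k_0}|\eta$, which is the magnitude half of the very statement. Hence $\|\nabla w\|_{L^2}$ is genuinely of order $\eta|\mathbf k_0|$, and no energy bound of size $O(\eta)$ uniform in $|\mathbf k_0|$ exists. Consistent with this, the integration-by-parts bound you anticipate, $\epsilon|\mathbf k_0|(1+t|\mathbf k_0|)$, is of the same order as the crude one once $t|\mathbf k_0|\ge1$. Poincar\'e then gives $\|w\|_{L^2}\lesssim\eta|\mathbf k_0|$, not $\eta$, and an $L^2$ bound on $w$ would not control $\sum_{\mathbf j\neq\mathbf k_0}|\mathbf j|^2|\hat u_{\mathbf j}|^2$ anyway. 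So no sharper estimate of $w$, whether by time integration by parts or by a Born series, can compensate for a bound that begins with $|\text{kernel}|\le T$.

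The missing step is the one the paper uses: bound $|\sin|\le 1$. The kernel then contributes $1/|\mathbf n_1|$, which cancels the $|\mathbf n_1|$ produced by $\mathbf n_1\cdot\mathbf j$ for every $\mathbf n_1\neq\mathbf 0$, with no window restriction. This gives
\[
|\hat u_{\mathbf n_1}(t)-\hat u^{\text{approx}}_{\mathbf n_1}(t)|\le\epsilon\int_0^t\Bigl(|\mathbf k_0|\,|\hat p_{\mathbf n_1-\mathbf k_0}|\,|\hat w_{\mathbf k_0}(s)|+\|\hat p\|_{\ell^2}\,\|\nabla w(s)\|_{L^2}\Bigr)\,\mathrm ds .
\]
Now use the crude estimate you already wrote down, $\|\nabla w(s)\|_{L^2}\lesssim\epsilon|\mathbf k_0|^2 s$, together with $|\hat w_{\mathbf k_0}|\le\|\nabla w\|_{L^2}/|\mathbf k_0|$. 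This yields an error $\lesssim\epsilon^2|\mathbf k_0|^2T^2=\eta^2$.

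Once repaired, your argument is a genuine variant of the paper's:
\begin{itemize}
\item The paper never forms $w$. It gets $|\delta\hat u_{\mathbf k_0}|\le\eta$ from Duhamel on the single mode $\mathbf k_0$, with kernel bounded by $1/|\mathbf k_0|$ and source bounded via energy conservation of $u$. It then controls the scattered gradient by splitting the conserved energy. That only gives $(\epsilon+\eta)^{1/2}|\mathbf k_0|$, hence an error of order $\eta(\epsilon+\eta)^{1/2}$.
\item Your energy estimate for the difference $w$ needs a bit more regularity of $p$, to put $\epsilon\nabla\cdot(p\nabla u^{(0)}_{\mathbf k_0})$ in $L^2$. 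In exchange it is shorter and yields the sharper $O(\eta^2)$.
\end{itemize}
Your Step~3 matches the paper's magnitude argument. The denominator there should be $a^2-b^2$ rather than $b^2-a^2$, which does not affect the size.
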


\subsection{Proof of Theorem~\ref{thm:main}}

The proof relies on energy estimates. We denote $v_t := \partial_t u$.

\begin{lemma}[Energy Conservation]\label{lem:energy}
The energy functional $E(t) = \frac{1}{2} \int_{\Omega} (|\partial_t u|^2 + c^2(\mathbf{x}) |\nabla u|^2) \mathrm{d}\mathbf{x}$ is conserved, i.e., $E(t) \equiv E(0)$.
\end{lemma}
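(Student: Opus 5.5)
We will verify the conservation law by differentiating $E(t)$ in time and showing that $E'(t)=0$. We assume enough regularity that all manipulations are justified. Since $c^2 = 1+\epsilon p$ is smooth and bounded below by $m>0$, and the initial data $\phi_{\mathbf{k}_0}$ is a trigonometric polynomial, the solution $u$ is smooth in $(\mathbf{x},t)$. Concretely, $u\in C^2([0,T];H^s(\mathbb{T}^d))$ for every $s$, by standard hyperbolic theory or Galerkin approximation in the Fourier basis. This lets us differentiate under the integral sign and integrate by parts freely. If the coefficients are complex-valued (the initial datum $\phi_{\mathbf{k}_0}$ is complex), the same argument applies with $|\cdot|^2$ and real parts. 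Alternatively, we can apply it separately to the real and imaginary parts of $u$, which each solve the same real equation because $c^2$ is real.

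The key computation is the following. For real $u$, the time derivative is
\[
E'(t) = \int_{\Omega} \partial_t u\,\partial_t^2 u \,\mathrm{d}\mathbf{x} + \int_{\Omega} c^2(\mathbf{x})\,\nabla u\cdot\nabla(\partial_t u)\,\mathrm{d}\mathbf{x}.
\]
We integrate the second term by parts on the torus. There are no boundary terms, by periodicity of $u$, $\partial_t u$ and $c$. This gives $-\int_\Omega \partial_t u\,\nabla\cdot(c^2\nabla u)\,\mathrm{d}\mathbf{x}$. Hence
\[
E'(t)=\int_\Omega \partial_t u\,\bigl(\partial_t^2 u-\nabla\cdot(c^2\nabla u)\bigr)\,\mathrm{d}\mathbf{x}=0
\]
by \eqref{eq:wave_eq}. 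Since $E'\equiv 0$, it follows that $E(t)\equiv E(0)$.

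In the complex case we write $E'(t)=\mathrm{Re}\int_\Omega(\overline{\partial_t u}\,\partial_t^2u + c^2\nabla\bar u\cdot\nabla\partial_t u)\,\mathrm{d}\mathbf{x}$ and argue identically. Alternatively, we use the splitting $u=\mathrm{Re}\,u+\imath\,\mathrm{Im}\,u$, noting that the energy is additive over the two parts.

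The only point needing care, and thus the main obstacle, is the regularity justification that permits differentiating $E$ and integrating by parts. I would handle it as follows. First carry out the computation for the Galerkin truncations $u_N$ onto Fourier modes $|\mathbf{n}|\le N$, obtained from the system \eqref{eq:fourier_evolution} with the coupling projected onto these modes. For each $N$ the truncated system is a finite-dimensional linear ODE. It conserves the truncated energy exactly, by the same computation, because the projected operator $-P_N\nabla\cdot(c^2\nabla\,\cdot)P_N$ is symmetric and nonnegative. We then pass to the limit $N\to\infty$ using the uniform energy bound and uniqueness of the solution. Alternatively, one may simply cite smoothness of solutions of linear hyperbolic equations with smooth coefficients on a compact manifold, which makes the direct calculation above rigorous.
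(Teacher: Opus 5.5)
Your proof is correct. The paper states this lemma without proof, treating it as the standard energy identity, and your argument is exactly that standard computation: differentiate $E$, integrate by parts on $\mathbb{T}^d$ (no boundary terms), and use the equation, with real parts or a real/imaginary splitting for the complex datum $\phi_{\mathbf{k}_0}$.

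One small caveat concerns your Galerkin alternative. Weak compactness from the uniform energy bound only gives $E(t)\le E(0)$, by lower semicontinuity, so equality there needs time-reversibility (rerun the argument backward from time $t$) or strong convergence. Your other justification, citing smoothness of solutions for smooth coefficients with $c^2\ge 1-\epsilon>0$, already makes the direct computation rigorous.
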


\begin{lemma}[Uniform Bound on Spatial Gradient] \label{lem:grad_bound}
The weighted sum of squares of all frequency components is bounded for small $\epsilon$:
\begin{equation}\label{eq:grad_bound}
    \sum_{\mathbf{n} \in \mathbb{Z}^d} |\mathbf{n}|^2 |\hat{u}_{\mathbf{n}}(t)|^2 \le \frac{M}{m} |\mathbf{k}_0|^2 \le (1 + 3\epsilon) |\mathbf{k}_0|^2.
\end{equation}
\end{lemma}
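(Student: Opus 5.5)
The bound will follow from Lemma~\ref{lem:energy} by comparing the energy at time $t$ with the initial energy and exchanging the weight $c^2$ for its extreme values $m$ and $M$. Parseval's identity then turns the gradient norm into the weighted sum of Fourier coefficients.

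First I compute the initial energy. Since $\partial_t u(\cdot,0)=0$ and $u(\cdot,0)=\phi_{\mathbf{k}_0}$, we have $\nabla u(\mathbf{x},0) = \imath \mathbf{k}_0 \phi_{\mathbf{k}_0}(\mathbf{x})$, and $|\phi_{\mathbf{k}_0}|^2=(2\pi)^{-d}$ pointwise. Using $c^2\le M$,
\begin{equation*}
E(0) = \frac12 \int_\Omega c^2(\mathbf{x}) |\mathbf{k}_0|^2 (2\pi)^{-d}\,\mathrm{d}\mathbf{x} \le \frac{M}{2} |\mathbf{k}_0|^2 ,
\end{equation*}
because $\Omega$ has volume $(2\pi)^d$. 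Next I bound the energy at time $t$ from below. Dropping the nonnegative kinetic term and using $c^2\ge m$ gives
\begin{equation*}
E(t) \ge \frac{m}{2}\int_\Omega |\nabla u(\mathbf{x},t)|^2 \,\mathrm{d}\mathbf{x} = \frac{m}{2}\sum_{\mathbf{n}\in\mathbb{Z}^d} |\mathbf{n}|^2 |\hat u_{\mathbf{n}}(t)|^2 .
\end{equation*}
The equality is Parseval's identity in the orthonormal basis $\{\phi_{\mathbf{n}}\}$, since $\widehat{\nabla u}_{\mathbf{n}} = \imath \mathbf{n}\,\hat u_{\mathbf{n}}$. This requires $u(\cdot,t)\in H^1$, which holds for the smooth solution generated by smooth data and a smooth coefficient.

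By Lemma~\ref{lem:energy}, $E(t)=E(0)$. Combining the two displays yields $\sum_{\mathbf{n}} |\mathbf{n}|^2|\hat u_{\mathbf{n}}(t)|^2 \le \frac{M}{m}|\mathbf{k}_0|^2$, which is the first inequality.

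For the second inequality, substitute $M=1+\epsilon$ and $m=1-\epsilon$. Then
\begin{equation*}
\frac{1+\epsilon}{1-\epsilon} = 1 + \frac{2\epsilon}{1-\epsilon} \le 1+3\epsilon \quad\text{whenever } \frac{2}{1-\epsilon}\le 3, \text{ i.e. } \epsilon\le \tfrac13 .
\end{equation*}
This is exactly the "small $\epsilon$" hypothesis of the lemma, made explicit as $\epsilon\le 1/3$.

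The argument contains no real obstacle. The only points requiring care are the normalization of $\phi_{\mathbf{k}_0}$, so that the constant comes out as exactly $|\mathbf{k}_0|^2$ rather than with stray factors of $2\pi$, and the justification of Parseval's identity for $\nabla u$.
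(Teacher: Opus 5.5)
Your proposal is correct and follows the paper's proof essentially verbatim: you sandwich the conserved energy between $\frac{m}{2}\|\nabla u(t)\|^2$ and $\frac{M}{2}\|\nabla u(0)\|^2 = \frac{M}{2}|\mathbf{k}_0|^2$, and then apply Parseval's identity. Your explicit threshold $\epsilon \le 1/3$ for $(1+\epsilon)/(1-\epsilon)\le 1+3\epsilon$ makes precise what the paper leaves as the asymptotic remark $M/m = 1+2\epsilon+O(\epsilon^2)$.
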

\begin{proof}
By energy conservation, $m \|\nabla u(t)\|^2 \le 2E(t) = 2E(0) \le M \|\nabla u(0)\|^2 = M |\mathbf{k}_0|^2$. The result follows from Parseval's identity $\|\nabla u(t)\|^2 = \sum_{\mathbf{n}} |\mathbf{n}|^2 |\hat{u}_{\mathbf{n}}(t)|^2$ and $M/m = (1+\epsilon)/(1-\epsilon) = 1+2\epsilon + O(\epsilon^2)$.
\end{proof}

We decompose the dominant mode $\mathbf{k}_0$ into the principal term and a perturbation: $\hat{u}_{\mathbf{k}_0}(t) = \hat{u}_{\mathbf{k}_0}^{(0)}(t) + \delta \hat{u}_{\mathbf{k}_0}(t)$, where 
$\hat{u}_{\mathbf{\mathbf{k}_0}}^{(0)}$ satisfies
\begin{equation} \label{eq:fourier_evolution_unpreturbed}
    \frac{\mathrm{d}^2\hat{u}_{\mathbf{\mathbf{k}_0}}^{(0)}}{\mathrm{d}t^2} + |\mathbf{k}_0|^2 \hat{u}_{\mathbf{\mathbf{k}_0}}^{(0)}
    = 0,\, \hat{u}_{\mathbf{\mathbf{k}_0}}^{(0)}(0)=1,\,\frac{\mathrm{d}\hat{u}_{\mathbf{\mathbf{k}_0}}^{(0)}}{\mathrm{d}t}(0)=0 ,
\end{equation}
and $\delta \hat{u}_{\mathbf{k}_0}$ satisfies the forced equation \eqref{eq:fourier_evolution} with zero initial data.
\begin{equation} \label{eq:fourier_evolution_unpreturbed}
    \frac{\mathrm{d}^2\delta \hat{u}_{\mathbf{k}_0}}{\mathrm{d}t^2} + |\mathbf{k}_0|^2 \delta \hat{u}_{\mathbf{k}_0}
    =  -\epsilon \sum_{\mathbf{j} \neq \mathbf{k}_0} (\mathbf{k}_0 \cdot \mathbf{j}) \hat{p}_{\mathbf{k}_0-\mathbf{j}} \hat{u}_{\mathbf{j}},\, \delta \hat{u}_{\mathbf{k}_0}(0)=0,\,\frac{\mathrm{d}\delta \hat{u}_{\mathbf{k}_0}}{\mathrm{d}t}(0)=0 .
\end{equation}

\begin{lemma}[Estimate of Dominant Frequency Perturbation] \label{lem:k0_perturb}
Define $C_1=(1+3\epsilon)^{\frac{1}{2}}\|p\|_{L^2}$. Let $\eta := C_1 \epsilon T |\mathbf{k}_0|$. The perturbation $\delta \hat{u}_{\mathbf{k}_0}$ satisfies:
\begin{equation}
    |\delta\hat{u}_{\mathbf{k}_0}(t)| \le \eta, \quad |\delta\hat{v}_{\mathbf{k}_0}(t)| \le |\mathbf{k}_0| \eta.
\end{equation}
\end{lemma}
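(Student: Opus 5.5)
We estimate the forcing term in the equation for $\delta \hat{u}_{\mathbf{k}_0}$ uniformly in time, then integrate it with Duhamel's formula. Write
\[
F(t) := -\epsilon \sum_{\mathbf{j} \neq \mathbf{k}_0} (\mathbf{k}_0 \cdot \mathbf{j})\, \hat{p}_{\mathbf{k}_0-\mathbf{j}}\, \hat{u}_{\mathbf{j}}(t).
\]
Since $\delta \hat{u}_{\mathbf{k}_0}$ has zero initial data,
\[
\delta \hat{u}_{\mathbf{k}_0}(t) = \frac{1}{|\mathbf{k}_0|}\int_0^t \sin\bigl(|\mathbf{k}_0|(t-s)\bigr) F(s)\,\mathrm{d}s,
\qquad
\delta \hat{v}_{\mathbf{k}_0}(t) = \int_0^t \cos\bigl(|\mathbf{k}_0|(t-s)\bigr) F(s)\,\mathrm{d}s,
\]
where $\delta\hat v_{\mathbf{k}_0}$ denotes $\frac{\mathrm{d}}{\mathrm{d}t}\delta\hat u_{\mathbf{k}_0}$. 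From these formulas, $|\delta \hat{v}_{\mathbf{k}_0}(t)| \le T \sup_s |F(s)|$. For the displacement we do not divide by $|\mathbf{k}_0|$; instead we use $|\sin(|\mathbf{k}_0|(t-s))| \le |\mathbf{k}_0|(t-s)$, or simply integrate $\delta\hat v$, to get $|\delta \hat{u}_{\mathbf{k}_0}(t)| \le T \cdot T\sup|F|$. To reach the claimed bound with a single factor of $T$, the natural target is
\[
\sup_{t\in[0,T]} |F(t)| \le C_1 \epsilon |\mathbf{k}_0|^2 .
\]
This gives $|\delta\hat v_{\mathbf{k}_0}| \le |\mathbf{k}_0|\eta$ directly. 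For $\delta\hat u_{\mathbf{k}_0}$ we bound $\bigl|\sin(\cdot)/|\mathbf{k}_0|\bigr| \le 1/|\mathbf{k}_0|$, which yields $|\delta\hat u_{\mathbf{k}_0}| \le T C_1\epsilon|\mathbf{k}_0| = \eta$.

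The key step is the bound on $F$. Using $|\mathbf{k}_0 \cdot \mathbf{j}| \le |\mathbf{k}_0|\,|\mathbf{j}|$ and then Cauchy--Schwarz over $\mathbf{j}$,
\[
|F(t)| \le \epsilon |\mathbf{k}_0| \Bigl(\sum_{\mathbf{j}} |\hat{p}_{\mathbf{k}_0-\mathbf{j}}|^2\Bigr)^{1/2} \Bigl(\sum_{\mathbf{j}} |\mathbf{j}|^2 |\hat{u}_{\mathbf{j}}(t)|^2\Bigr)^{1/2}.
\]
The first factor is at most $\|p\|_{L^2}$ by Parseval. The second factor is at most $(1+3\epsilon)^{1/2}|\mathbf{k}_0|$ by Lemma~\ref{lem:grad_bound}, which itself rests on energy conservation, Lemma~\ref{lem:energy}. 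Multiplying these gives $|F(t)| \le C_1\epsilon|\mathbf{k}_0|^2$ uniformly on $[0,T]$, and substituting into the Duhamel bounds above finishes the proof.

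The main obstacle is the normalization of the Fourier coefficients of $p$. The coupling coefficient in \eqref{eq:fourier_evolution} carries a factor $(2\pi)^{-d/2}$ from the product of Fourier series, so the constant may differ from $\|p\|_{L^2}$ by such a factor. I would track the constant carefully here, and if necessary absorb the factor into the definition of $\hat{p}$ so that $C_1$ matches exactly. A second point is the Lemma~\ref{lem:grad_bound} constant: $(1+\epsilon)/(1-\epsilon) \le 1+3\epsilon$ requires $\epsilon \le 1/3$, which is covered by the smallness assumption on $\epsilon$.
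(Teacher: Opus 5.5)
Your proposal is correct and follows essentially the same route as the paper. Cauchy--Schwarz, Parseval and Lemma~\ref{lem:grad_bound} bound the forcing by $C_1\epsilon|\mathbf{k}_0|^2$ on $[0,T]$, and Duhamel with $|\sin(\cdot)|/|\mathbf{k}_0|\le 1/|\mathbf{k}_0|$ (and $|\cos(\cdot)|\le 1$ for $\delta\hat v_{\mathbf{k}_0}$, which you spell out where the paper says ``similarly'') gives both bounds, so your $T^2$ detour is unnecessary; your normalization remark is accurate (the mode equation should carry a factor $(2\pi)^{-d/2}$), but since that factor is at most $1$ it only strengthens the estimate, and no redefinition of $\hat p$ is needed to match $C_1$.
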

\begin{proof}
The source term in Equation~\eqref{eq:fourier_evolution} for $\mathbf{k}_0$ is $R_{\mathbf{k}_0}(t) = -\epsilon \sum_{\mathbf{j} \neq \mathbf{k}_0} (\mathbf{k}_0 \cdot \mathbf{j}) \hat{p}_{\mathbf{k}_0-\mathbf{j}} \hat{u}_{\mathbf{j}}$. By Cauchy-Schwarz and Lemma \ref{lem:grad_bound}:
\[
    |R_{\mathbf{k}_0}| \le \epsilon |\mathbf{k}_0| \|p\|_{L^2} \|\nabla u(t)\|_{L^2} \le C_1 \epsilon |\mathbf{k}_0|^2.
\]
Applying Duhamel's principle to $\delta \hat{u}_{\mathbf{k}_0}$ yields 
\begin{equation}
    \delta\hat{u}_{\mathbf{k}_0}(t) = \int_0^t \frac{\sin(|\mathbf{k}_0|(t-s))}{|\mathbf{k}_0|} R_{\mathbf{k}_0}(s) \, \mathrm{d}s.
\end{equation}
Thus $|\delta \hat{u}_{\mathbf{k}_0}(t)| \le \int_0^t |\mathbf{k}_0|^{-1} |R_{\mathbf{k}_0}(s)| \mathrm{d}s \le \eta$. The derivative $|\delta\hat{v}_{\mathbf{k}_0}(t)|$ follows similarly.
\end{proof}

\begin{lemma}[Energy Bound for Scattered Modes] \label{lem:scatt_energy}
The total energy of modes $\mathbf{n} \neq \mathbf{k}_0$ satisfies:
\begin{equation}
    E_{\text{scatt}}(t) := \sum_{\mathbf{n} \neq \mathbf{k}_0} \left( |\hat{v}_{\mathbf{n}}|^2 + (1-\epsilon)|\mathbf{n}|^2 |\hat{u}_{\mathbf{n}}|^2 \right) \le C \epsilon |\mathbf{k}_0|^2 + C \eta |\mathbf{k}_0|^2.
\end{equation}
\end{lemma}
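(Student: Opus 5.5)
The plan is to subtract the energy carried by the dominant mode $\mathbf{k}_0$ from the conserved total energy of Lemma~\ref{lem:energy}, and to use Lemma~\ref{lem:k0_perturb} to show that the $\mathbf{k}_0$-mode keeps almost all of it. The first step is to pass from the weighted energy to a Fourier-diagonal quantity. Since $c^2 \ge 1-\epsilon$, we have
\[
2E(t) \ge \|\partial_t u\|^2 + (1-\epsilon)\|\nabla u\|^2 = \sum_{\mathbf{n}} \left(|\hat{v}_{\mathbf{n}}|^2 + (1-\epsilon)|\mathbf{n}|^2|\hat{u}_{\mathbf{n}}|^2\right)
\]
by Parseval. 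Conservation together with $c^2\le 1+\epsilon$ gives $2E(t)=2E(0)\le (1+\epsilon)|\mathbf{k}_0|^2$, because the initial data is $\phi_{\mathbf{k}_0}$ with zero velocity and $\|\nabla\phi_{\mathbf{k}_0}\|^2=|\mathbf{k}_0|^2$. Splitting off the $\mathbf{n}=\mathbf{k}_0$ term then yields
\[
E_{\text{scatt}}(t) \le (1+\epsilon)|\mathbf{k}_0|^2 - |\hat{v}_{\mathbf{k}_0}(t)|^2 - (1-\epsilon)|\mathbf{k}_0|^2|\hat{u}_{\mathbf{k}_0}(t)|^2 .
\]

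The second step is to bound the dominant-mode energy from below. Write $\hat{u}_{\mathbf{k}_0}=\cos(|\mathbf{k}_0|t)+\delta\hat{u}_{\mathbf{k}_0}$ and $\hat{v}_{\mathbf{k}_0}=-|\mathbf{k}_0|\sin(|\mathbf{k}_0|t)+\delta\hat{v}_{\mathbf{k}_0}$. Expanding the squares gives
\[
|\hat{v}_{\mathbf{k}_0}|^2+|\mathbf{k}_0|^2|\hat{u}_{\mathbf{k}_0}|^2 \ge |\mathbf{k}_0|^2 - 2|\mathbf{k}_0|\,|\delta\hat{v}_{\mathbf{k}_0}| - 2|\mathbf{k}_0|^2|\delta\hat{u}_{\mathbf{k}_0}| ,
\]
where the cross terms are bounded using $|\cos|,|\sin|\le 1$ and the nonnegative quadratic perturbation terms are dropped. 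Lemma~\ref{lem:k0_perturb} bounds both cross terms by $2|\mathbf{k}_0|^2\eta$, so the dominant-mode energy is at least $|\mathbf{k}_0|^2(1-4\eta)$.

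The final step is to insert this into the first inequality. The factor $(1-\epsilon)$ multiplies only the $\hat{u}$ term, so we lose an additional amount $\epsilon|\mathbf{k}_0|^2|\hat{u}_{\mathbf{k}_0}|^2$. Since $|\hat{u}_{\mathbf{k}_0}|\le 1+\eta$, this loss is at most $\epsilon(1+\eta)^2|\mathbf{k}_0|^2$. Collecting terms gives
\[
E_{\text{scatt}}(t)\le \left(2\epsilon + \epsilon(2\eta+\eta^2) + 4\eta\right)|\mathbf{k}_0|^2 ,
\]
which is of the claimed form $C\epsilon|\mathbf{k}_0|^2 + C\eta|\mathbf{k}_0|^2$ for $\eta\le 1$. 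One small gap is the bound $|\hat{u}_{\mathbf{k}_0}|\le 1+\eta$. Lemma~\ref{lem:k0_perturb} supplies it directly; alternatively, $|\hat{u}_{\mathbf{k}_0}|\le 1+\epsilon$ follows from Lemma~\ref{lem:grad_bound}.

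The main obstacle is the cross terms. They are only first order in the perturbation, so the bound is linear in $\eta$ rather than quadratic, and this is why $C\eta$ appears in the statement. Everything else is bookkeeping with Parseval's identity and the bounds $m\le c^2\le M$. If a sharper bound (quadratic in $\eta$) were needed, I would try to exploit the phase structure of the Duhamel integral for $\delta\hat{u}_{\mathbf{k}_0}$. For the lemma as stated, however, the crude linear bound suffices.
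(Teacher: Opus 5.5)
Your proof is correct and follows essentially the same route as the paper. Both arguments bound the conserved energy from below using $c^2\ge 1-\epsilon$, split off the $\mathbf{k}_0$ mode, expand $\hat u_{\mathbf{k}_0}=\hat u^{(0)}_{\mathbf{k}_0}+\delta\hat u_{\mathbf{k}_0}$, and control the cross terms linearly in $\eta$ via Lemma~\ref{lem:k0_perturb}. Your bookkeeping is somewhat cleaner than the paper's, since you use $|\hat v^{(0)}_{\mathbf{k}_0}|^2+|\mathbf{k}_0|^2|\hat u^{(0)}_{\mathbf{k}_0}|^2=|\mathbf{k}_0|^2$ and drop the nonnegative quadratic terms. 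One small slip: your fallback via Lemma~\ref{lem:grad_bound} gives $|\hat u_{\mathbf{k}_0}|\le\sqrt{1+3\epsilon}$, not $1+\epsilon$, but this is harmless because you rely on $1+\eta$.
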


\begin{proof}
Based on energy conservation and the lower bound $c^2(\mathbf{x}) \ge 1-\epsilon$, the total energy satisfies the inequality:

\begin{equation} \label{eq:energy_split}
    \left( |\hat{v}_{\mathbf{k}_0}|^2 + (1-\epsilon)|\mathbf{k}_0|^2 |\hat{u}_{\mathbf{k}_0}|^2 \right) + \sum_{\mathbf{n} \neq \mathbf{k}_0} \left( |\hat{v}_{\mathbf{n}}|^2 + (1-\epsilon)|\mathbf{n}|^2 |\hat{u}_{\mathbf{n}}|^2 \right) \le 2E(0).
\end{equation}
Substituting the decomposition $\hat{u}_{\mathbf{k}_0} = \hat{u}_{\mathbf{k}_0}^{(0)} + \delta \hat{u}_{\mathbf{k}_0}$ and $\hat{v}_{\mathbf{k}_0} = \hat{v}_{\mathbf{k}_0}^{(0)} + \delta \hat{v}_{\mathbf{k}_0}$ into the first term (denoted as $E_{\mathbf{k}_0}(t)$), we expand the squares:
\begin{equation}
E_{\mathbf{k}_0}(t) =
|\hat{v}_{\mathbf{k}_0}^{(0)}|^2
+ 2\text{Re}\!\left(\hat{v}_{\mathbf{k}_0}^{(0)} \overline{\delta \hat{v}_{\mathbf{k}_0}}\right)
+ |\delta \hat{v}_{\mathbf{k}_0}|^2
+ (1-\epsilon)|\mathbf{k}_0|^2
\left(
|\hat{u}_{\mathbf{k}_0}^{(0)}|^2
+ 2\text{Re}\!\left(\hat{u}_{\mathbf{k}_0}^{(0)} \overline{\delta \hat{u}_{\mathbf{k}_0}}\right)
+ |\delta \hat{u}_{\mathbf{k}_0}|^2
\right).
\end{equation}
Recall that the initial energy is $2E(0) =  |\mathbf{k}_0|^2 |\hat{u}_{\mathbf{k}_0}^{(0)}|^2$. Rearranging \eqref{eq:energy_split} yields:
\begin{align*}
    E_{\text{scatt}}(t) &\le 2E(0) - E_{\mathbf{k}_0}(t) \\
    &= \epsilon |\mathbf{k}_0|^2 |\hat{u}_{\mathbf{k}_0}|^2 - \left( 2\text{Re}(\hat{v}_{\mathbf{k}_0}^{(0)} \overline{\delta \hat v_{\mathbf{k}_0}}) + 2(1-\epsilon)|\mathbf{k}_0|^2 \text{Re}(\hat{u}_{\mathbf{k}_0}^{(0)} \overline{\delta \hat u_{\mathbf{k}_0}}) \right) - \left( |\delta \hat{v}_{\mathbf{k}_0}|^2 + (1-\epsilon)|\mathbf{k}_0|^2 |\delta \hat{u}_{\mathbf{k}_0}|^2 \right).
\end{align*}
Using the bounds from Lemma \ref{lem:k0_perturb} ($|\delta \hat{u}_{\mathbf{k}_0}| \le \eta$, $|\delta \hat{v}_{\mathbf{k}_0}| \le |\mathbf{k}_0|\eta$) and the bound $|\hat{u}_{\mathbf{k}_0}| \le 1 + 3\epsilon$, we estimate the terms on the right-hand side. The term  $\epsilon |\mathbf{k}_0|^2 |\hat{u}_{\mathbf{k}_0}|^2 $ is bounded by $C \epsilon |\mathbf{k}_0|^2$. The cross terms (linear in $\delta u, \delta v$) are bounded by $C |\mathbf{k}_0|^2 \eta$. The quadratic terms are of order $O(\eta^2)$, which is dominated by $O(\eta)$ since $\eta \ll 1$. Thus, $E_{\text{scatt}}(t) \le C(\epsilon + \eta) |\mathbf{k}_0|^2$.
\end{proof}

\begin{proof}[Proof of Theorem~\ref{thm:main}]
By Duhamel's principle, the true solution $\hat{u}_{\mathbf{n}_1}(t)$ satisfies
\[
    \hat{u}_{\mathbf{n}_1}(t) = -\frac{\epsilon}{|\mathbf{n}_1|} \int_0^t \sin(|\mathbf{n}_1|(t-s)) \left( \sum_{\mathbf{j} \neq \mathbf{n}_1} (\mathbf{n}_1 \cdot \mathbf{j}) \hat{p}_{\mathbf{n}_1-\mathbf{j}} \hat{u}_{\mathbf{j}}(s) \right) \mathrm{d}s.
\]
The error $\Delta(t) := |\hat{u}_{\mathbf{n}_1}(t) - \hat{u}_{\mathbf{n}_1}^{\text{approx}}(t)|$ arises from two sources:
\begin{align*}
    \Delta(t) \le \frac{\epsilon}{|\mathbf{n}_1|} \int_0^t \Bigg| \underbrace{\sum_{\mathbf{j} \neq \mathbf{k}_0, \mathbf{n}_1} (\mathbf{n}_1 \cdot \mathbf{j}) \hat{p}_{\mathbf{n}_1-\mathbf{j}} \hat{u}_{\mathbf{j}}(s)}_{\text{Part I: Scattering Interactions}} + \underbrace{(\mathbf{n}_1 \cdot \mathbf{k}_0) \hat{p}_{\mathbf{n}_1-\mathbf{k}_0} \delta \hat{u}_{\mathbf{k}_0}(s)}_{\text{Part II: Perturbation of } \mathbf{k}_0} \Bigg| \, \mathrm{d}s.
\end{align*} 

We claim that there holds that $\Delta(t) = O(\eta^{1.5}) + O(\eta^2) = o(\eta)$.
To estimate  Part I, we use Cauchy-Schwarz and Lemma \ref{lem:scatt_energy} and obtain :
\[
    \text{Part I} \le |\mathbf{n}_1| \|p\|_{L^2} \left( \sum_{\mathbf{j} \neq \mathbf{k}_0} |\mathbf{j}|^2 |\hat{u}_{\mathbf{j}}|^2 \right)^{1/2} \lesssim |\mathbf{n}_1| (\epsilon + \eta)^{1/2} |\mathbf{k}_0|.
\]
Integrating over $[0, t]$ gives a contribution of order $\epsilon T |\mathbf{k}_0| (\epsilon+\eta)^{1/2} \sim \eta^{1.5} = o(\eta)$.
To estimate Part II, we apply Lemma \ref{lem:k0_perturb} ($|\delta \hat{u}_{\mathbf{k}_0}| \le \eta$) and obtain :
\[
    \text{Part II} \le |\mathbf{n}_1| |\mathbf{k}_0| \cdot 1 \cdot \eta.
\]
Integrating over $[0, t]$ then gives a contribution of order $\epsilon T |\mathbf{k}_0| \eta \sim \eta^2 = o(\eta)$.

To verify the magnitude of the approximation $\hat{u}_{\mathbf{n}_1}^{\text{approx}}(t)$, we substitute $\hat{u}_{\mathbf{k}_0}^{(0)}(s) = \cos(|\mathbf{k}_0|s)$ into the explicit formula. Using the identity $2\sin A \cos B = \sin(A+B) + \sin(A-B)$, the integral \eqref{eqn : u_approx}  can be decomposed as:
\begin{equation}\label{eqn : u_hat_approx}
    \hat{u}_{\mathbf{n}_1}^{\text{approx}}(t) = -\frac{\epsilon (\mathbf{n}_1 \cdot \mathbf{k}_0)}{2|\mathbf{n}_1|} \hat{p}_{\mathbf{n}_1-\mathbf{k}_0} \int_0^t \Big[ \sin\left(|\mathbf{n}_1|t - (|\mathbf{n}_1|-|\mathbf{k}_0|)s\right)
    + \sin\left(|\mathbf{n}_1|t - (|\mathbf{n}_1|+|\mathbf{k}_0|)s\right) \Big] \, \mathrm{d}s.
\end{equation}
Note that $\frac{\mathbf{n}_1 \cdot \mathbf{k}_0}{|\mathbf{n}_1|} \sim O(|\mathbf{k}_0|)$ and the integral term in \eqref{eqn : u_hat_approx} is comparable with $t$ when $|\mathbf{n}_1| \approx |\mathbf{k}_0|$. Consequently, the total magnitude of \eqref{eqn : u_hat_approx} scales as $\epsilon |\mathbf{k}_0| T = \eta$. Since the error $\Delta(t) = o(\eta)$, the approximate solution $\hat{u}_{\mathbf{n}_1}^{\text{approx}}$ captures the dominant dynamics.
\end{proof}

 In summary, Theorem \ref{thm:main} elucidates the mechanism of energy transfer from the initial frequency $\mathbf{k}_0$ to scattered modes $\mathbf{n}_1$. The explicit form of the approximate solution demonstrates that the amplitude of the scattered wave is directly proportional to the Fourier coefficient $\hat{p}_{\mathbf{n}_1-\mathbf{k}_0}$. This dependence establishes the principle of \textit{frequency locality}: given the smoothness of the medium perturbation $p(\mathbf{x})$, its Fourier coefficients decay rapidly as the spectral distance $|\mathbf{n}_1 - \mathbf{k}_0|$ increases. Consequently, significant energy transfer is effectively confined to a narrow spectral neighborhood around the dominant frequency $\mathbf{k}_0$, rendering long-range interactions in the frequency domain negligible.

\section{The windowed Fourier propagator}\label{sec:wfp_methodology}

 Inspired by Theorem~\ref{thm:main}, we aim to design a neural network with the following features: 
First,Due to the principle of superposition, we want to design a neural network to stimulate the evolution of each Fourier component. In particular, the network is layered by time, and the neurons model Fourier modes.
Second,The initial values and the medium parameter are set to be the network inputs.
Third, Given a numerical threshold, one may set a frequency neighbourhood, called the frequency window, near the driving frequencies in the frequency domain. The frequencies beyond the window are all negligible due to the decay of the Fourier coefficients of the medium parameter.

The windowed Fourier propagator learns a localized evolution operator in the frequency domain  through a multi-stage workflow:
\begin{enumerate}
	\item \textbf{Input extraction}: The inputs, including the initial value and the medium parameter, are transformed into the frequency domain. In the meanwhile, their initial frequencies and amplitudes are extracted and stored as vectors of finite dimension.
	
	\item \textbf{Localized evolution}: The neural network maps each vector to a propagator localized in the frequency window with a given threshold. The latter is a small learnable dictionary describing how the amplitude of each driving mode spreads to other Fourier modes. 
	
	\item \textbf{Amplitude scaling and aggregation}: Each propagator is scaled by its original amplitude, and the complete set of evolved Fourier coefficients is synthesized.
	
	\item \textbf{Output Reconstruction}: The inverse (discrete) Fourier transform converts the synthesized coefficients back into the full-field spatial solution. 
\end{enumerate}

In this section, we elucidate this process for the Cauchy problem \eqref{eq:wave_equation_homogeneous} with an initial value $f$ and a medium structural function $c^2$ dominated by low frequencies.

\subsection{The frequency window and extraction function}\label{def:significant_frequency_extraction}

 The WFP is a neural operator in the frequency domain. To attain low computational complexity, it only processes the data through some extraction functions within some frequency windows. We first formulate these concepts.

The initial value $f$ can be mapped, via the discrete Fourier transform (DFT), to  frequency-amplitude pairs $\{(\mathbf{k}_i, \hat{f}_{\mathbf{k}_i})\}_{i=1}^M$. Specifying a numerical threshold for $\hat{f}$, say $\tau > 0$, 
we capture the frequency-amplitude pairs whose amplitudes exceed $\tau$, and filter out other components. Such pairs are collected in the set \[\mathcal{F}_\tau := \left\{ (\mathbf{k}_i, \hat{f}_{\mathbf{k}_i}) \in \mathbb{Z}^d \times \mathbb{C} : |\hat{f}_{\mathbf{k}_i}| \geq \tau \right\}.\]
For each element in $\mathcal{F}_\tau$, the frequency $\mathbf{k}_i$ and the  amplitude $\hat{f}_{\mathbf{k}_i}$ will be stored. This procedure yields a concise $[N, d+2]$ tensor, with $N := |\mathcal{F}_\tau|$, representing the initial condition as a collection of $[\,\mathrm{Re}(\hat{f}_{\mathbf{k}_i}),\, \mathrm{Im}(\hat{f}_{\mathbf{k}_i}),\, \mathbf{k}_i\,]$, enabling efficient parallel processing.

To extract the data of the structural function $c^2$, we introduce the frequency window and the windowed frequency extraction function. 
A frequency window is a ball in the frequency domain $\mathbb{Z}^d$, and a windowed frequency extraction function picks up the Fourier coefficients of a function  with in a frequency window. More precisely, 
\begin{definition}\label{def:window_extraction}
	 Let $\mathbf{k}_c \in \mathbb{Z}^d$ and $r > 0$. The frequency window, centred at $\mathbf{k}_c$ with radius $r$, is taken to be  \[\mathcal{W}(\mathbf{k}_c, r) := \left\{ \mathbf{k} \in \mathbb{Z}^d : \|\mathbf{k} - \mathbf{k}_c\|_{\infty} \leq r \right\}.\]
	For $h \in C^{l}(\Omega)$, the windowed frequency extraction function of $h$ is defined to 
	 \begin{align}\mathcal{E}(h, \mathbf{k}_c, r) := \left\{ \hat{h}_{\mathbf{k}} : \mathbf{k} \in \mathcal{W}(\mathbf{k}_c, r) \right\}.\label{eq : window extraction}\end{align}
	 These extracted coefficients are flattened into a real, one-dimensional array of length $2 \times (2r + 1)^d$.
\end{definition}

We assume $c^2(x)$ is a smooth function dominated by its low frequency part. Hence, 
\begin{equation}\label{velocity_feature}
	\mathcal{C} := \mathcal{E}\left(c^2, \mathbf{0}, r_c\right)
\end{equation} 
is a good approximation of $c^2$ for a moderate $r_c > 0$. This vector $\mathcal{C}$ serves as a compact representation of the medium parameter, independent of the spatial resolution.

   For a driving frequency $\mathbf{k}_0$, the target output vector for  \eqref{eq:wave_equation_homogeneous}  is set to be the extraction of $u(\mathbf{x}, t_{\text{final}})$ within the window centred at $\mathbf{k}_0$ with some radius $r_w > 0$. That is
\begin{equation}\label{window}
	\mathcal{F}(\mathbf{k}_0, r_w) := \mathcal{E}\left(u(\mathbf{x}, t_{\text{final}}), \mathbf{k}_0, r_w\right).
\end{equation}


\subsection{Architecture of the WFP}

The WFP operates through a structured pipeline to predict wave evolution, designed for $d$-dimensional problems and ensuring linearity with respect to initial conditions:

\begin{figure}[ht]
	\begin{center}
		\includegraphics[,width=0.8\linewidth]{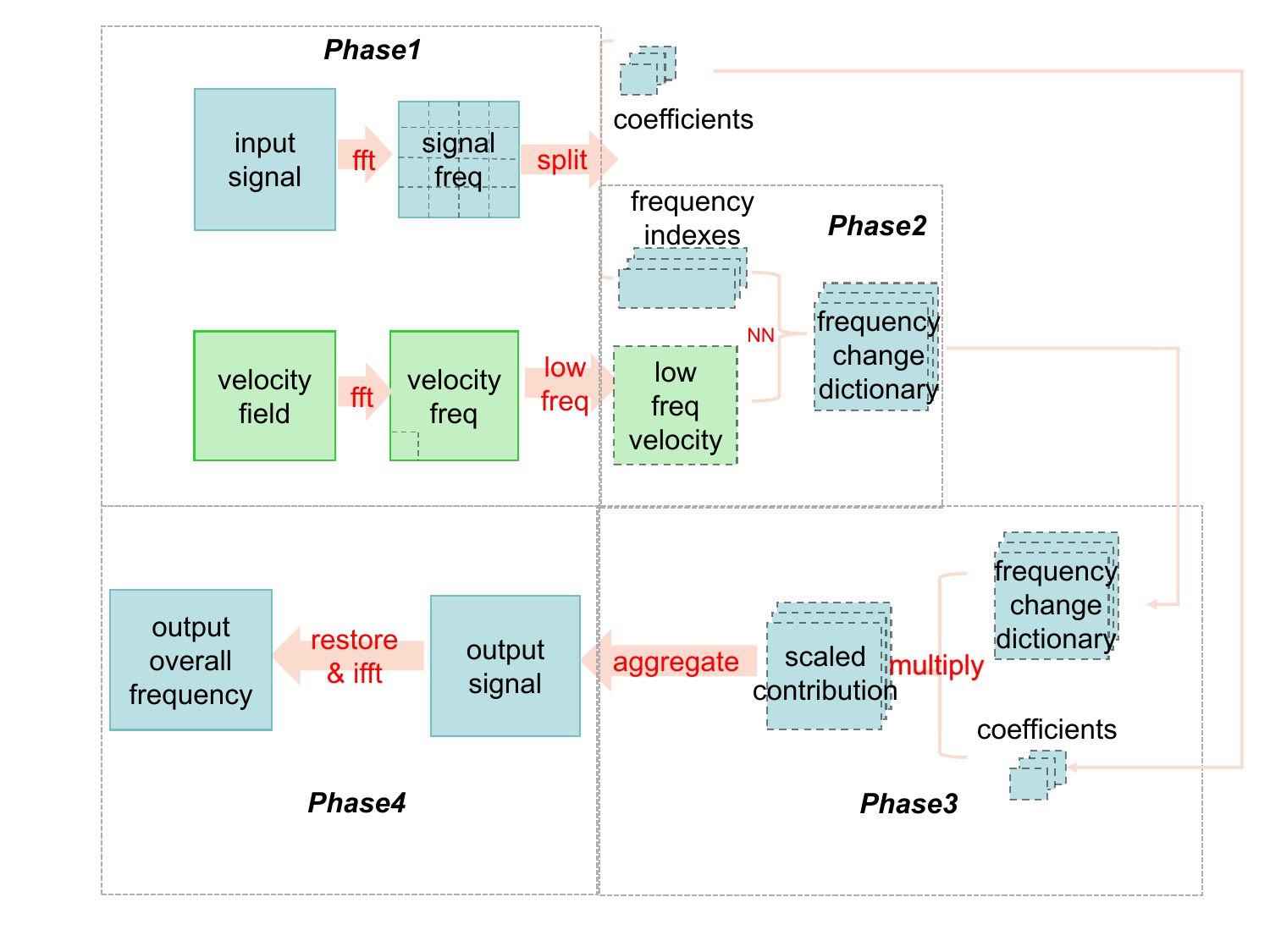}
	\end{center}
	\caption{Overview of the WFP network architecture} 
	\label{fig:architecture}
\end{figure}\begin{itemize}
\item  \textbf{Phase 1: Input Extraction} 

The initial condition $u(\mathbf{x},0)$ (and $u_t(\mathbf{x},0)$, if applicable, 
processed similarly) is transformed into the Fourier coefficient  $\hat{u}_{\mathbf{k}}(0)$ over the frequency domain $\mathbb{Z}^d$ using 
a $d$-dimensional Discrete Fourier Transform (DFT). 
The driving components $(\mathbf{k}_i, \hat{u}_{\mathbf{k}_i}(0))$ 
are extracted by selecting frequency vectors $\mathbf{k}_i \in \mathbb{Z}^d$ whose 
complex amplitudes $\hat{u}_{\mathbf{k}_i}(0)$ exceed a predefined threshold $\tau$, 
as defined in Definition~\ref{def:significant_frequency_extraction}. 
Simultaneously, the spatially varying velocity field $c(\mathbf{x})$ is transformed via DFT, 
and its low-frequency Fourier coefficients $\hat{c}_{\mathbf{k}}$ with $\|\mathbf{k}\|_{\infty} < r_c$
are extracted and organized into a compact vector $\mathcal{C}$ 
in \eqref{velocity_feature}, which serves as a resolution-independent 
representation of the medium.

\item \textbf{Phase 2:  Localized evolution} 

For each driving frequency component $(\mathbf{k}_i, \hat{u}_{\mathbf{k}_i}(0))$, an input token is formed by concatenating the frequency vector $\mathbf{k}_i$ with the velocity field representation $\mathcal{C}$. The neural network processes this token to predict the short-term ($\Delta t$) evolution  of mode $\mathbf{k}_i$ under $\mathcal{C}$, outputting an evolution dictionary $\mathcal{D}(\mathbf{k}_i, \mathcal{C})$ containing $(2r_w + 1)^d$ complex amplitudes (where $r_w$ is the window radius in \eqref{window}).  
This dictionary describes the predicted amplitudes of the waves at frequency
within 
$\mathcal{W}(\mathbf{k}_i, r_w)$. 

\item \textbf{Phase 3: Amplitude Scaling and Aggregation}
 
Each evolution dictionary $\mathcal{D}(\mathbf{k}_i, c)$ predicted by the network represents the evolution pattern for the Fourier mode  $e^{i \mathbf{k}_i} \cdot x$. For general initial values, this dictionary is scaled by the Fourier coefficient $\hat{u}_{\mathbf{k}_i}(0)$, yielding the scaled contribution \[\hat{U}_{\text{contrib}}(\mathbf{k}_i) = \hat{u}_{\mathbf{k}_i}(0) \cdot \mathcal{D}(\mathbf{k}_i, c).\] In this process, the linearity with respect to initial conditions is preserved throughout the evolution. Subsequently, the amplitude $\hat{u}_{\mathbf{k}'}(\Delta t)$ for each frequency $\mathbf{k}'$   is computed by aggregating all scaled contributions from initial modes whose windows contain that frequency. That is,  \[\hat{u}_{\mathbf{k}'}(\Delta t) = \sum_{(\mathbf{k}_i, \hat{u}_{\mathbf{k}_i}(0)) \text{ s.t. } \mathbf{k}' \in \mathcal{W}(\mathbf{k}_i, r)} \left( \hat{U}_{\text{contrib}}(\mathbf{k}_i) \right)_{\mathbf{k}'}.\] 

\item \textbf{Phase 4: Output Reconstruction} 

An inverse $d$-dimensional DFT is applied to $\hat{u}_{\mathbf{k}'}(\Delta t)$ to reconstruct the wave $u(\mathbf{x}, \Delta t)$ in the spatial domain.
\end{itemize}

We remark that this architecture ensures linearity with respect to initial conditions.
This is due to that the network's prediction $D(\mathbf{k}_i, c)$ is independent of the initial 
amplitude $\hat{u}_{\mathbf{k}_i}(0)$ and the initial amplitude is applied linearly during the scaling step.


	

\subsection{Neural Network Architecture for Learning the Propagator}
Learning the mapping from an initial mode and medium properties to an evolved window is challenging. Theorem~\ref{thm:main} shows the the evolution is highly structured and sparse, and the underlying solution is oscillatory, properties that are difficult for standard networks to capture efficiently.

To incorporate the local energy transfer demonstrated in Theorem~\ref{thm:main}, we employ a gated architecture (Figure~\ref{fig:gate_structure}), inspired by its use in recurrent networks \cite{cho2014learning,hochreiter1997long}. The network splits the input token into two branches: a main branch that predicts the evolution dictionary, and a gate branch with a sigmoid activation that produces a modulation vector. The final output is the element-wise product of these branches. This mechanism allows the network to learn a data-driven filter, enforcing a soft sparsity that helps focus on the most relevant physical interactions and improves accuracy. The detailed structure is in Appendix~\ref{appendix:nn_operations}.

The choice of activation function is another factor in performance. While standard functions like $\mathrm{ReLU}$ or $\mathrm{Tanh}$ are universal approximators, they may not catch the oscillatory nature of the wave equation effectively. To better align the network's inductive bias with the oscillatory behavior of the wave equation, we introduce a custom activation function that explicitly incorporates oscillatory behavior: $\sigma_{custom}(x) = A \exp(-Bx^2) \sin(Cx)$. This form heuristically resembles canonical wave solutions (e.g., wave packets) and, as demonstrated in Section~\ref{ablation_study}, leads to significantly improved performance compared to standard counterparts.

\begin{figure}[h]
	\centering
        \includegraphics[width=0.3\textwidth]{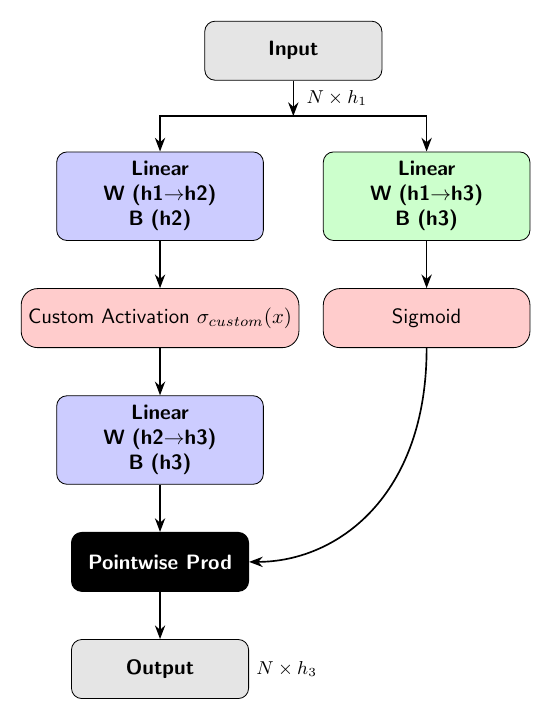}
	\caption{The structure of the neural network. An input token is processed through two branches: a main processing path with a custom activation and a gate branch with a sigmoid. Their element-wise product yields the final output, allowing the gate to control information flow \cite{hochreiter1997long} and emphasize relevant features.}
	\label{fig:gate_structure}
\end{figure}

\section{Numerical Experiments}\label{sec:experiments}
In this section, we present a comprehensive empirical evaluation of the Windowed Fourier Propagator (WFP) to validate its accuracy and generalization capabilities. We begin by detailing the data generation process and the construction of both in-distribution and out-of-distribution datasets in Section~\ref{sec:dataset_construction}. Subsequently, Section~\ref{sec:id_performance} demonstrates the model's fitting performance on 2D wave propagation tasks within the training distribution, including complex Gaussian beam evolutions. To assess the robustness of our framework, Section~\ref{sec:ood_generalization} investigates the model's generalization to out-of-distribution scenarios involving discontinuous media and stronger perturbations than those seen during training. Finally, we conclude with an ablation study in Section~\ref{ablation_study} to analyze the impact of activation function selection on the network's learning dynamics.

\subsection{Dataset Construction}\label{sec:dataset_construction}
The WFP is trained and evaluated on datasets of input-output pairs generated by numerically solving the 2D wave equation \eqref{eq:wave_equation_homogeneous}. Each sample in the dataset consists of a unique, smoothly varying wave speed field $c(\mathbf{x})$, a single-frequency initial condition $u(\mathbf{x},0)$, and the corresponding evolved wave fields $u(\mathbf{x}, T_{\text{final}})$ and $u_t(\mathbf{x}, T_{\text{final}})$ which serve as the target.

\textbf{In-Distribution Training Data:}
The training data is designed to represent wave propagation in smoothly varying inhomogeneous media.

\textbf{1. Wave Speed Field $c(\mathbf{x})$:} The velocity fields for the in-distribution dataset are generated by adding random perturbations to a baseline speed of 1.0. For a spatial coordinate $\mathbf{x}=(x,y) \in^2$, the field is constructed as:
\[ 
c(x,y) = (1 + C_{\text{offset}}) + \sum_{k=1}^{12} A_k \cdot \left[ p_k\cos(2\pi \mathbf{f}_k \cdot \mathbf{x}) + (1-p_k)\sin(2\pi \mathbf{f}_k \cdot \mathbf{x}) \right]
\]
where $C_{\text{offset}}$ is a uniform random offset in $[-0.02, 0.02]$.
 $A_k = \alpha \cdot 0.9^k$ is the amplitude of the $k$-th perturbation, with a base disturbance strength $\alpha=0.03$. The term $0.9^k$ ensures that higher-frequency components have smaller amplitudes, resulting in a smooth field.
  $p_k \sim \text{Bernoulli}(0.5)$ is a random variable that selects between sine and cosine.
 $\mathbf{f}_k = (f_{x,k}, f_{y,k})$ is the 2D frequency vector for the $k$-th perturbation. The frequencies are randomly sampled integers, with $f_{x,k} \in [1, \lfloor k/2 \rfloor + 2]$ and $f_{y,k} \in [-\lfloor k/2 \rfloor - 2, -1] \cup [1, \lfloor k/2 \rfloor + 2]$. This sampling strategy ensures a rich variety of smoothly varying media.

\textbf{2. Initial Condition and Ground Truth:} The initial condition for each sample is a single-frequency plane wave $u(\mathbf{x},0) = \sin(2 \pi \mathbf{f}_{\text{in}} \cdot \mathbf{x})$, where the input frequency $\mathbf{f}_{\text{in}}=(f_x, f_y)$ is randomly selected with integer components $f_x\in[16,96]$ and $f_y\in[-96,-16]\cup[16,96]$. The ground-truth evolved state at $T_{\text{final}} = 0.02\ \mathrm{s}$ is computed by solving the wave equation using a high-fidelity, second-order finite-difference scheme on a fine $2049\times2049$ grid.

\textbf{Out-of-Distribution (OOD) Test Data:}
To evaluate the model's generalization capabilities, we construct two types of OOD datasets where the velocity fields deviate from the training distribution.

\textbf{1. Discontinuous Media:} This dataset assesses the model's performance on non-smooth media. A sharp, localized discontinuity is introduced into an otherwise smooth background field. The velocity field $c(x,y)$ is defined as:
\[
c(x,y) = 0.99 + 0.01\sin(2\pi(10x+10y)) + \delta \cdot \mathbf{1}_{S}(x,y)
\]
where $\mathbf{1}_{S}(x,y)$ is an indicator function for a square region $S$. Specifically, $S$ is defined as the square region $[0.28, 0.48] \times [0.28, 0.48]$ in the unit domain. The parameter $\delta$ controls the strength of the discontinuity, which is varied from $0.005$ to $0.2$ in our experiments.

\textbf{2. Stronger Perturbations:} This dataset tests the model's ability to extrapolate to media with larger velocity variations than seen during training. The velocity fields are generated using the same formula as the in-distribution data, but the base disturbance strength $\alpha$ is systematically increased up to $0.2$, exceeding the training value of $\alpha=0.03$.


	
	

\subsection{Wave Equation - In Distribution Generalization}\label{sec:id_performance}

For the 2D cases in this section, we present the fitting results 
for the evolution of both plane waves and Gaussian wave packets, 
which serve as initial conditions with complex frequency components.
The velocity fields used in these experiments are sampled from the same distribution as the training data (i.e., in-distribution).

\textbf{2D Plane Waves Visualization (Figure~\ref{fig:plane_wave_performance_id}).}
The initial condition is set as $\sin(2\pi (80x+64y))+\cos(2\pi (64x+80y))$ and velocity field is randomly generated within the distribution of training data. Satisfactory fitting performance is achieved. 
\begin{figure}[h]
	\centering
	\begin{subfigure}[b]{0.24\textwidth}
		\centering
		\includegraphics[width=\textwidth]{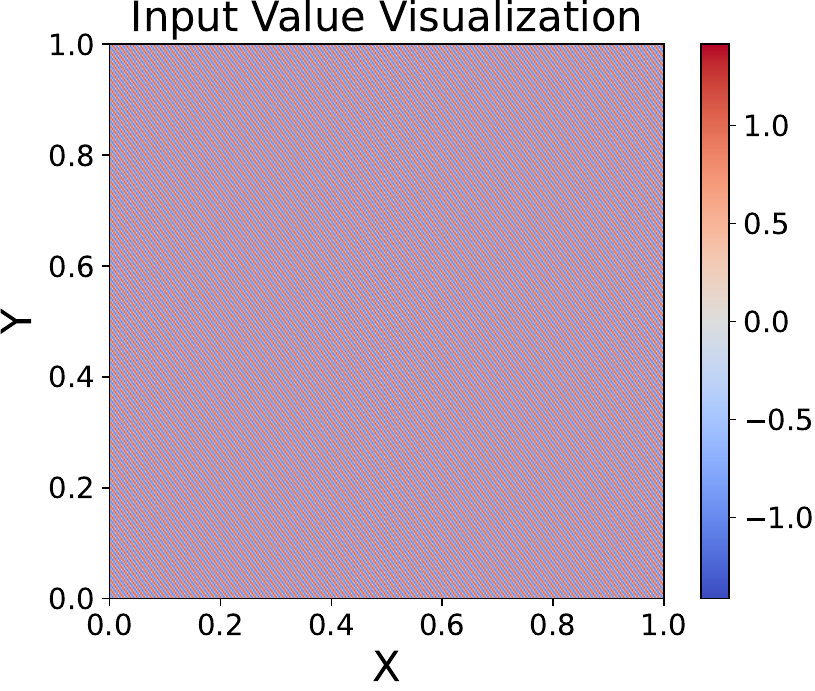}
		\caption{Initial}
		\label{fig:initial_condition}
	\end{subfigure}
	\hfill
	\begin{subfigure}[b]{0.24\textwidth}
		\centering
		\includegraphics[width=\textwidth]{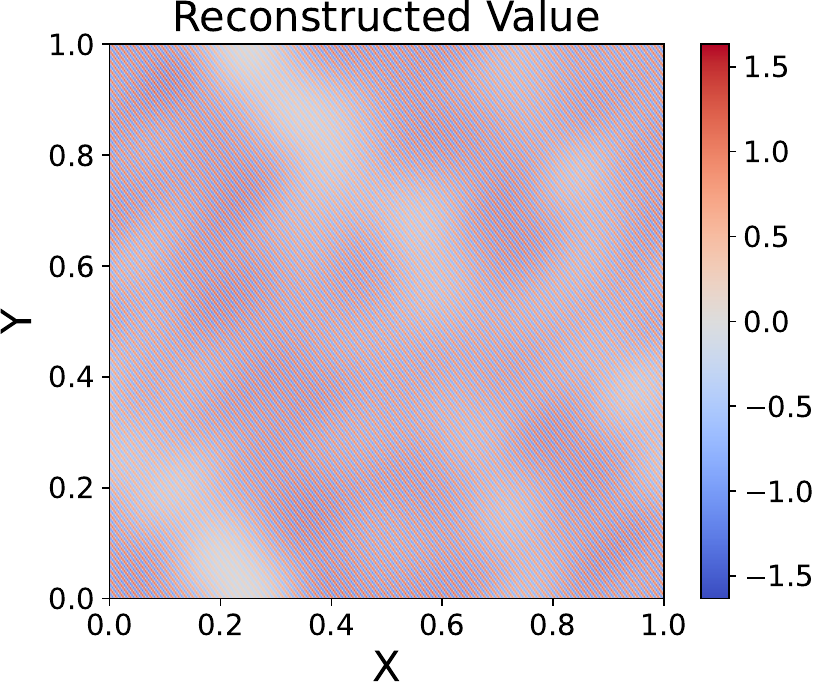}
		\caption{Ref.}
		\label{fig:2d_original}
	\end{subfigure}
	\hfill
	\begin{subfigure}[b]{0.24\textwidth}
		\centering
		\includegraphics[width=\textwidth]{figures/disturbance_0.05_reconstructed_value.pdf}
		\caption{Recon.}
		\label{fig:2d_reconstructed}
	\end{subfigure}
	\hfill
	\begin{subfigure}[b]{0.24\textwidth}
		\centering
		\includegraphics[width=\textwidth]{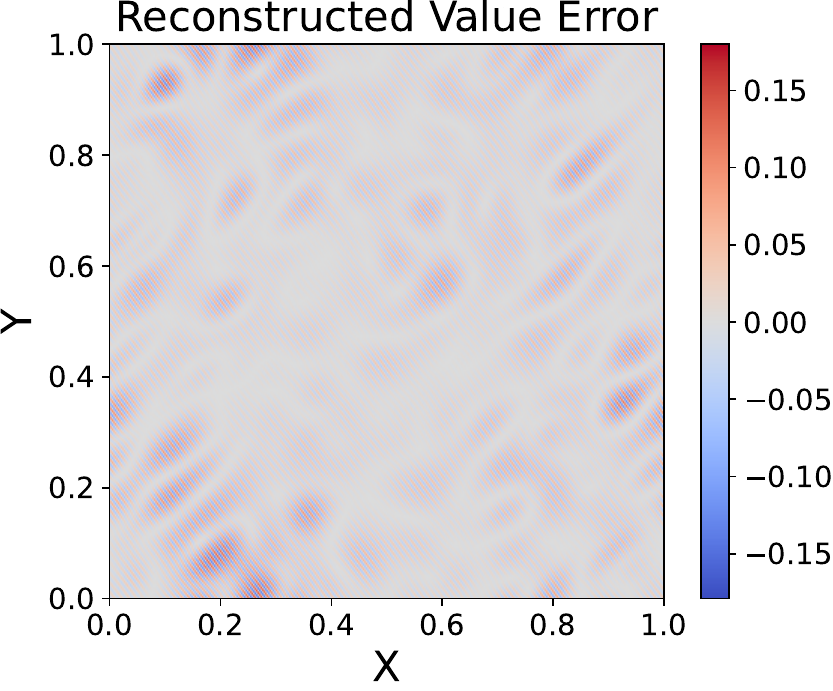}
		\caption{Error}
		\label{fig:2d_error}
	\end{subfigure}
	
	\vspace{1em} 

	\raggedright
	\textbf{Visualization of 2D Network Performance - Plane Waves:}
	Comparison between original and reconstructed solutions for 
	overall view.
	\caption{Network Performance for 2D In Distribution Case - Limited Initial Frequencies}
	\label{fig:plane_wave_performance_id}
\end{figure}

\textbf{Gaussian Beam Fitting Visualization (Figure~\ref{fig:gaussian_fit_performance}).}
A Gaussian wave packet 
is characterized by the product of a trigonometric function and a negative exponential function, typically exhibiting pronounced variations within a localized region in both physical and frequency domains. \textbf{Although the neural network is trained exclusively on single-frequency components, 
	the preservation of linearity within 
	the architecture ensures robust generalization 
	to arbitrary initial conditions.}

\begin{figure}[h]
	\centering
	\begin{subfigure}[b]{0.24\textwidth}
		\centering
		\includegraphics[width=\textwidth]{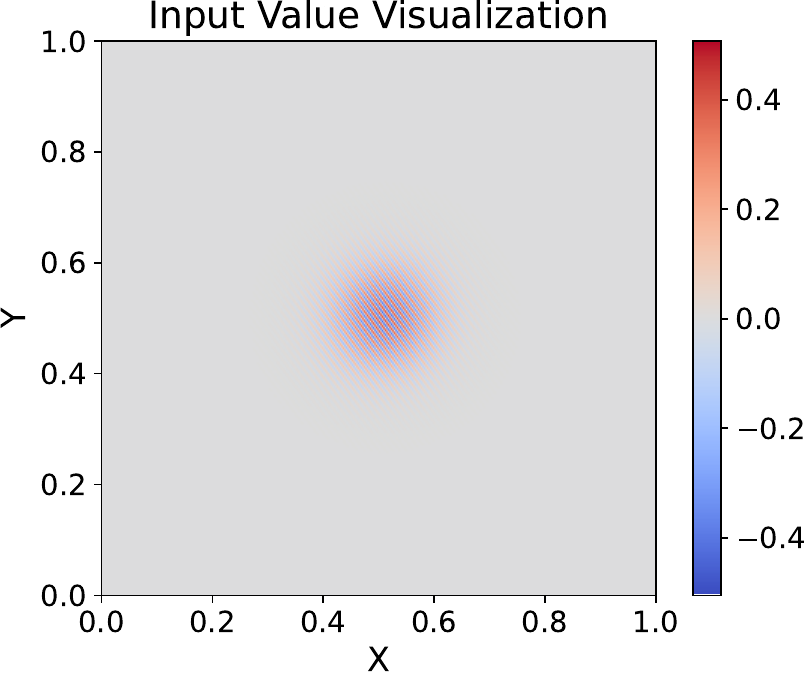}
		\caption{Initial}
		\label{fig:gaussian_initial_condition}
	\end{subfigure}
	\hfill
	\begin{subfigure}[b]{0.24\textwidth}
		\centering
		\includegraphics[width=\textwidth]{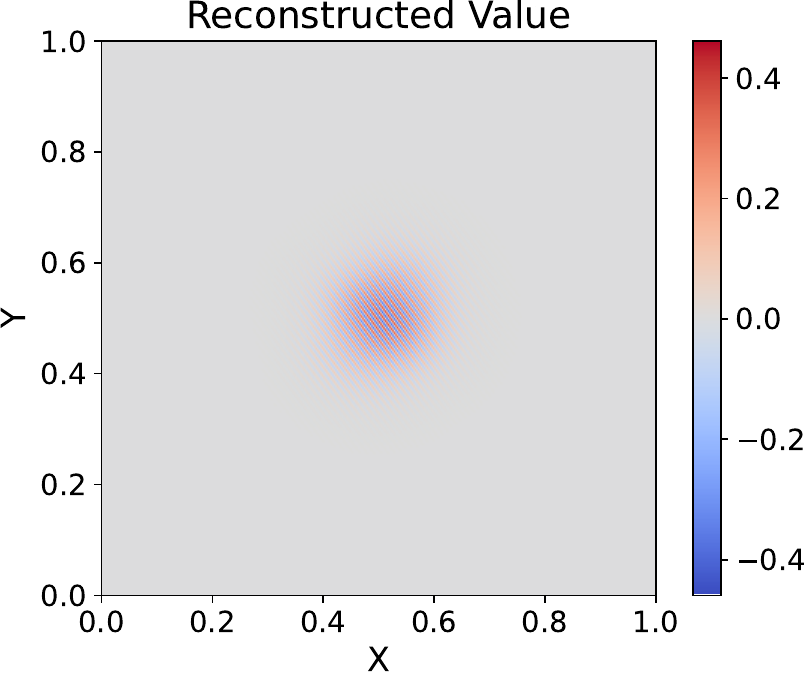}
		\caption{Ref.}
		\label{fig:gaussian_2d_original}
	\end{subfigure}
	\hfill
	\begin{subfigure}[b]{0.24\textwidth}
		\centering
		\includegraphics[width=\textwidth]{figures/gaussian_disturbance_0.04_reconstructed_value.pdf}
		\caption{Recon.}
		\label{fig:gaussian_2d_reconstructed}
	\end{subfigure}
	\hfill
	\begin{subfigure}[b]{0.24\textwidth}
		\centering
		\includegraphics[width=\textwidth]{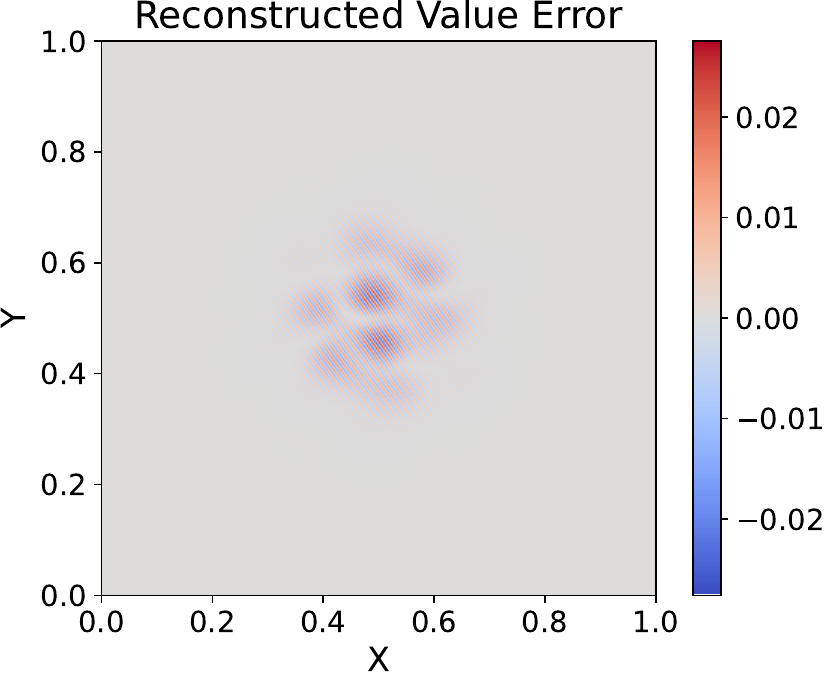}
		\caption{Error}
		\label{fig:gaussian_2d_error}
	\end{subfigure}
	
	\vspace{1em}

	\raggedright
	\textbf{Visualization of 2D Network Performance - Gaussian Beam:}
	Network can be used to handle complex initial conditions.
	\caption{Network Performance for 2D In Distribution Case - Gaussian Beam}
	\label{fig:gaussian_fit_performance}
\end{figure}
\begin{figure}[!h]
\centering
\begin{subfigure}[h]{0.24\textwidth}
\centering
\includegraphics[width=\textwidth]{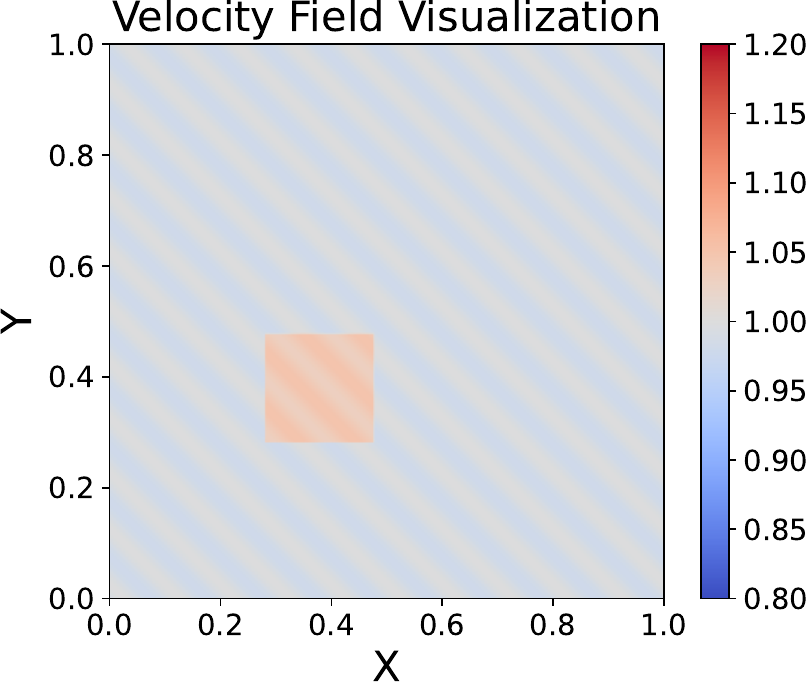} 
\caption{ Small Discontinuity}
\end{subfigure}
\begin{subfigure}[h]{0.24\textwidth}
\centering
\includegraphics[width=\textwidth]{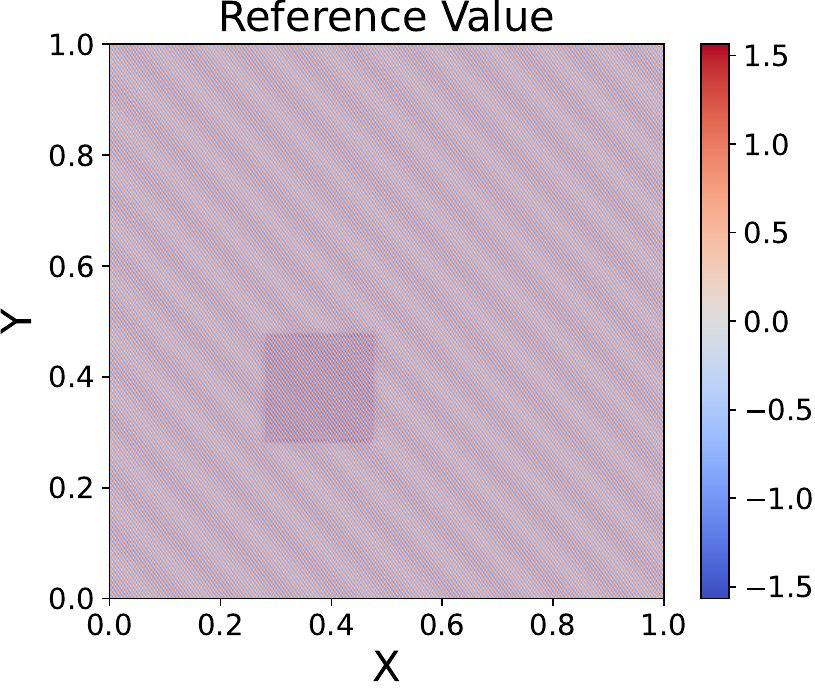} 
\caption{Reference (Small)}
\end{subfigure}
\hfill
\begin{subfigure}[h]{0.24\textwidth}
\centering
\includegraphics[width=\textwidth]{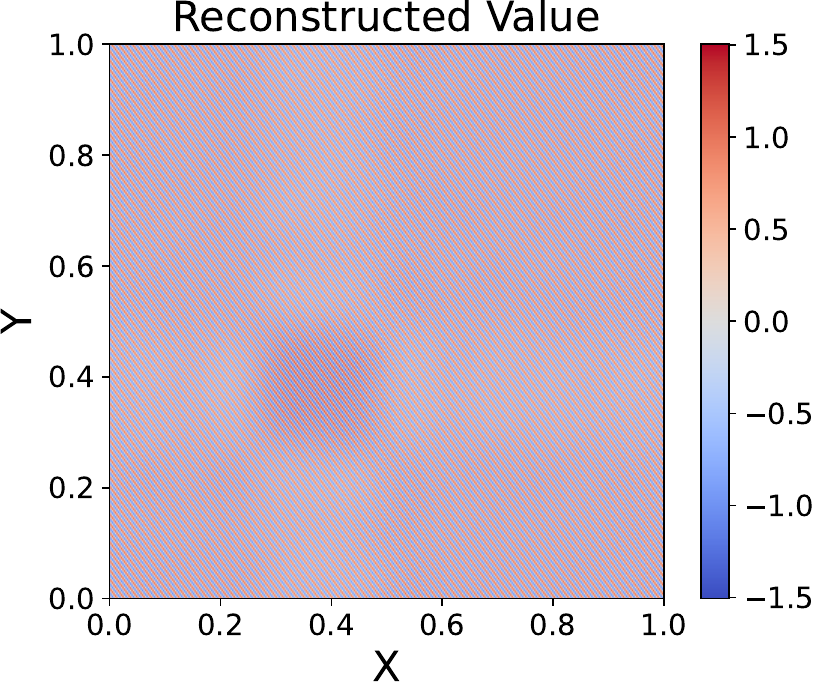} 
\caption{WFP Prediction (Small)}
\end{subfigure}
\hfill
\begin{subfigure}[h]{0.24\textwidth}
\centering
\includegraphics[width=\textwidth]{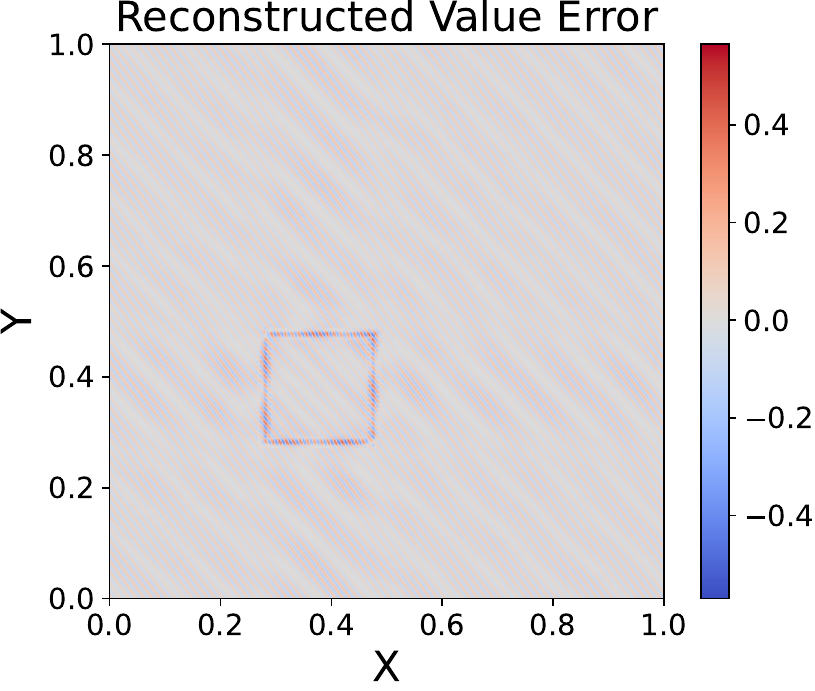} 
\caption{Error (Small)}
\end{subfigure}
\vspace{1em}

\begin{subfigure}[h]{0.24\textwidth}
\centering
\includegraphics[width=\textwidth]{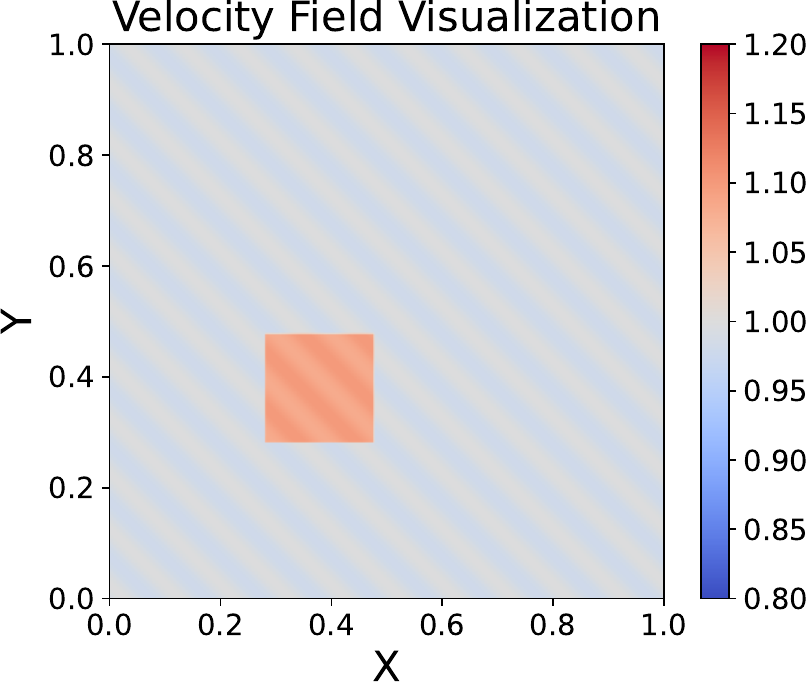} 
\caption{Medium Discontinuity}
\end{subfigure}
\begin{subfigure}[h]{0.24\textwidth}
\centering
\includegraphics[width=\textwidth]{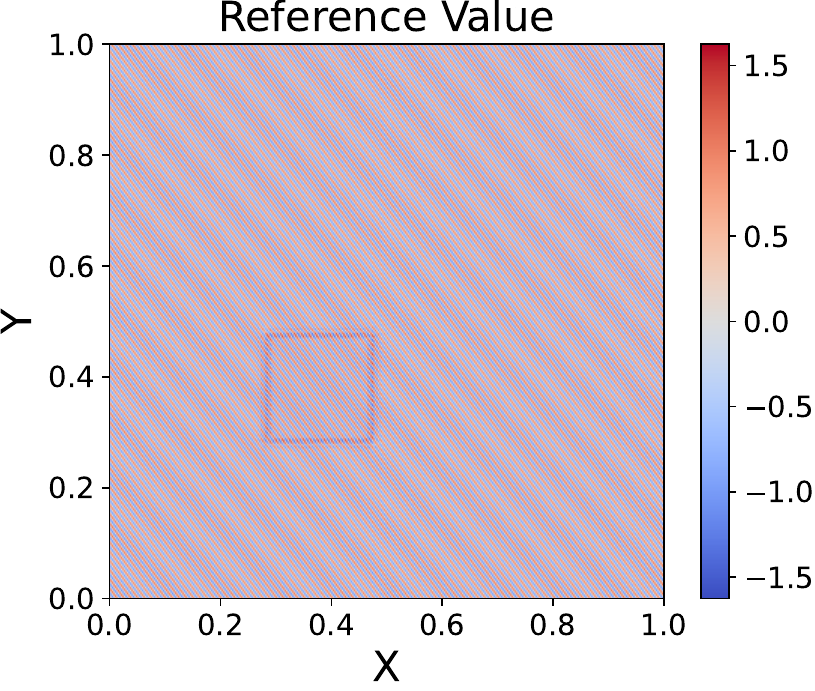} 
\caption{Reference (Medium)}
\end{subfigure}
\hfill
\begin{subfigure}[h]{0.24\textwidth}
\centering
\includegraphics[width=\textwidth]{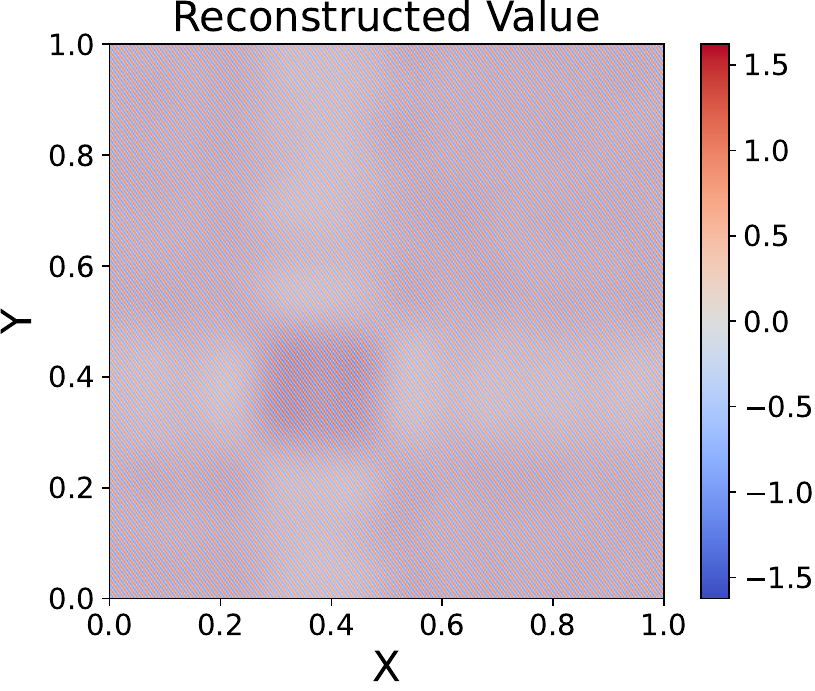} 
\caption{WFP Prediction (Medium)}
\end{subfigure}
\hfill
\begin{subfigure}[h]{0.24\textwidth}
\centering
\includegraphics[width=\textwidth]{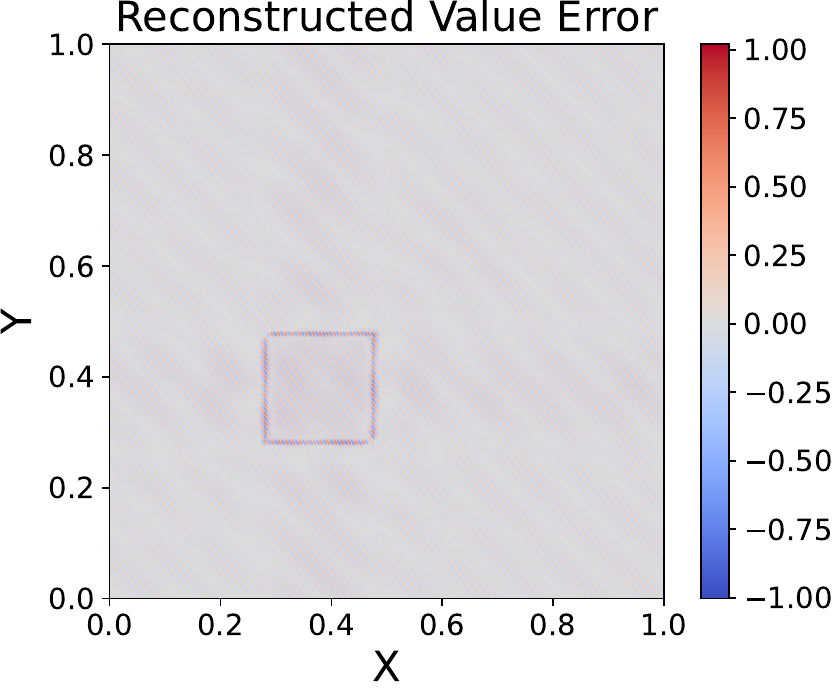} 
\caption{Error (Medium)}
\end{subfigure}
\vspace{1em}

\begin{subfigure}[h]{0.24\textwidth}
\centering
\includegraphics[width=\textwidth]{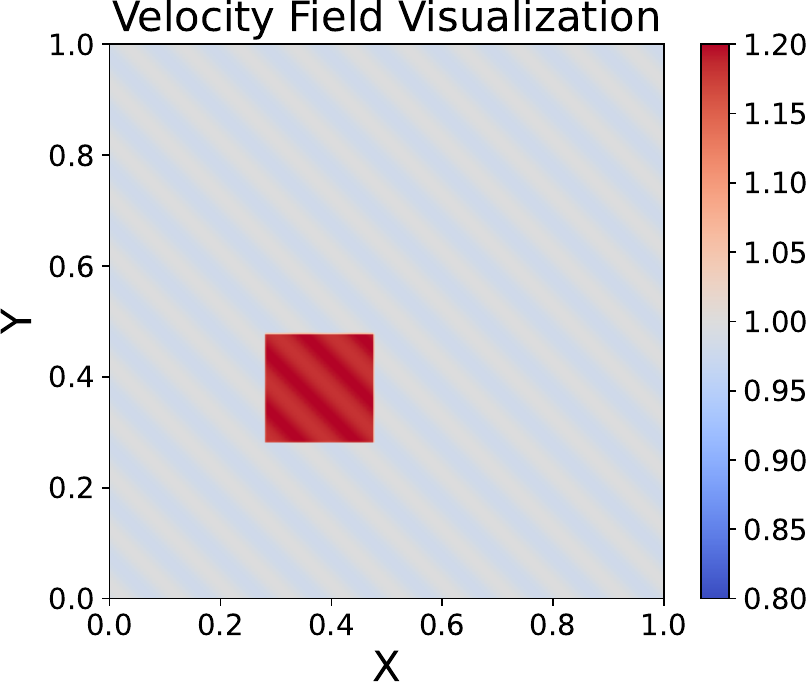} 
\caption{Large Discontinuity}
\end{subfigure}
\begin{subfigure}[h]{0.24\textwidth}
\centering
\includegraphics[width=\textwidth]{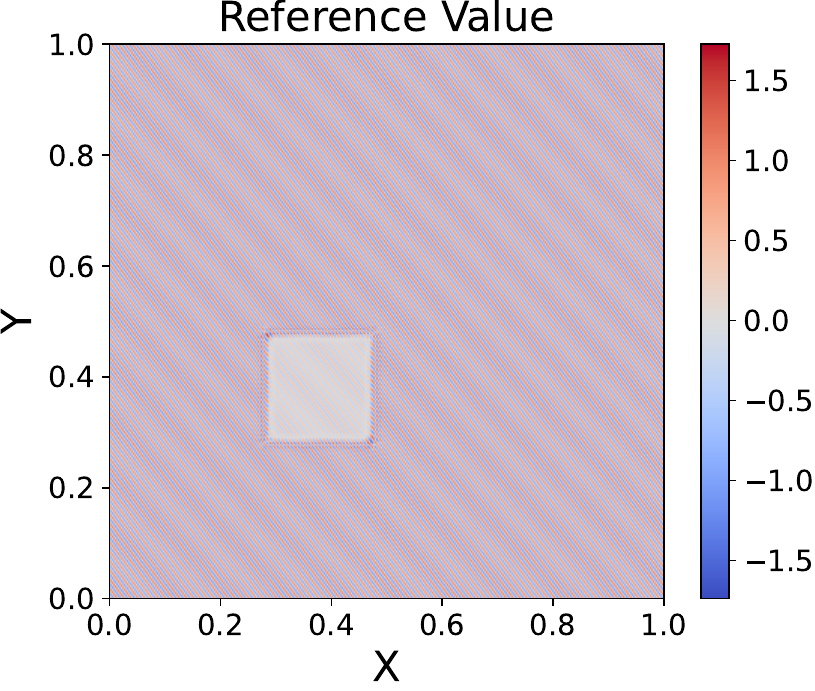} 
\caption{Reference (Large)}
\end{subfigure}
\hfill
\begin{subfigure}[h]{0.24\textwidth}
\centering
\includegraphics[width=\textwidth]{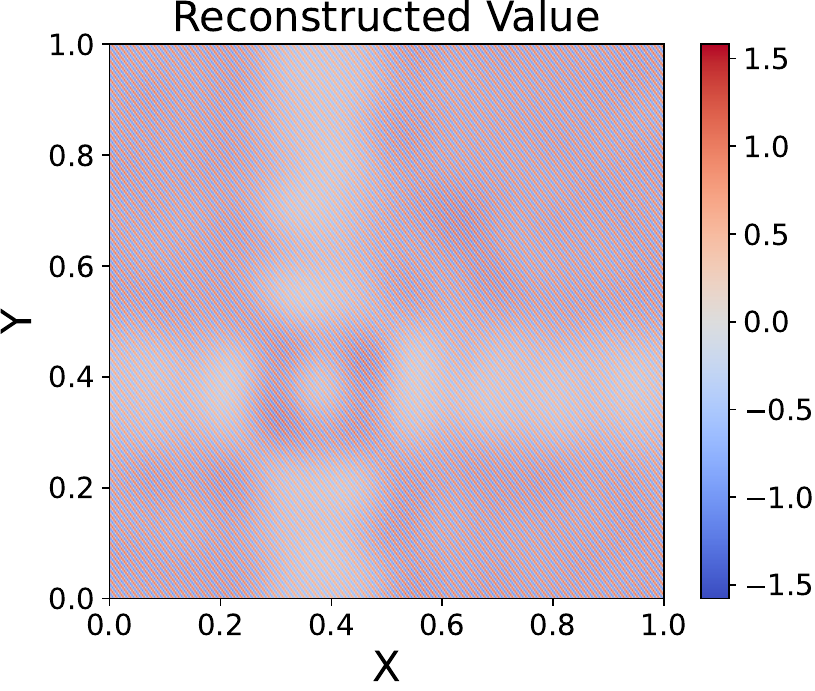} 
\caption{WFP Prediction (Large)}
\end{subfigure}
\hfill
\begin{subfigure}[h]{0.24\textwidth}
\centering
\includegraphics[width=\textwidth]{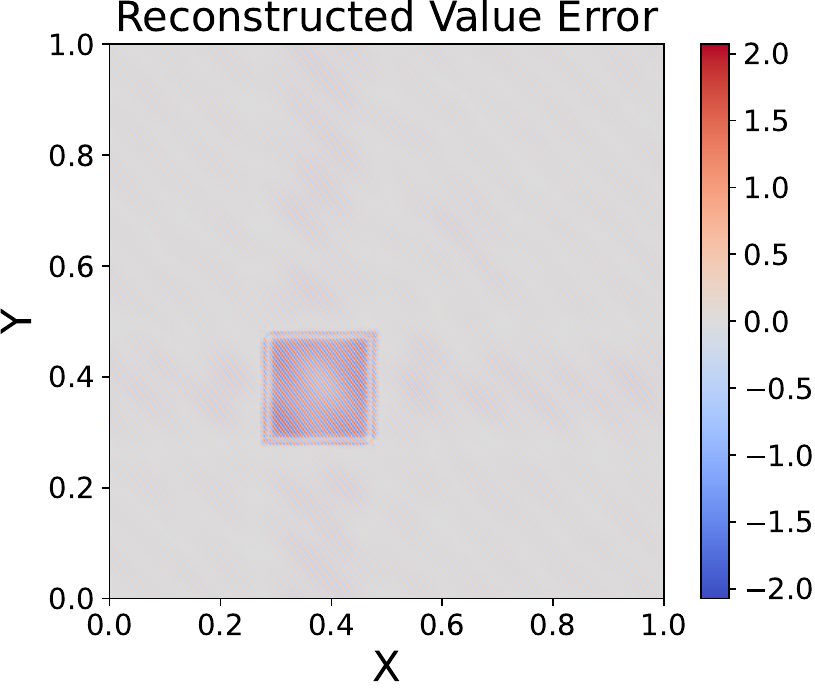} 
\caption{Error (Large)}
\end{subfigure}
\caption{OOD Discontinuity Generalization: Performance on a velocity field with a sharp discontinuity. The model handles small jumps well but struggles as the discontinuity strength increases.}
\label{fig:discontinuous_media_ood}
\end{figure}

\subsection{Wave Equation - Out of Distribution Generalization}\label{sec:ood_generalization}

A critical test for any physics-informed neural operator is its ability to generalize to scenarios that lie outside its training distribution. In this section, we evaluate the WFP's robustness on two challenging out-of-distribution (OOD) tasks: wave propagation in media with sharp discontinuities and in media with perturbation strengths beyond those seen during training.

We first test the model performance on a velocity field with a sharp discontinuity in Figure~\ref{fig:discontinuous_media_ood}. The network demonstrates robust and predictable behavior. For small discontinuities, the error is minimal and tightly localized at the boundary of discontinuity, with the solution in the smooth regions remaining highly accurate. As the discontinuity strength increases, the error grows predictably and begins to propagate into the interior, likely as the local wave speed exceeds the range seen during training. This graceful degradation highlights the stability of our method: while not unconditionally generalizable to arbitrary velocity fields, it performs reliably on weakly discontinuous media and avoids catastrophic failure in strongly out-of-distribution scenarios.

Next, we investigate how the model extrapolates to velocity fields with perturbation strengths greater than those encountered during training. We generate smooth media but scale the perturbation term by a factor that pushes it beyond the training distribution, as shown in Figure~\ref{fig:stronger_perturbation_ood}. The results reveal a similar pattern of graceful degradation. For perturbations slightly outside the training range, the WFP's predictions remain accurate, with errors primarily concentrated in regions where the local velocity contrast is highest. As the perturbation strength grows, the error increases more rapidly but stays localized in regions of high velocity contrast. This behavior is quantitatively summarized in Figure~\ref{fig:perturbation_error_curve}, which plots the L1 error as a function of the perturbation scaling factor. The curve shows a slow, near-linear increase in error for moderately OOD inputs, followed by a steeper rise for far-OOD scenarios. This confirms that while the WFP has a reliable, albeit limited, range of extrapolation, its performance degrades predictably rather than catastrophically.

\begin{figure}
\centering
\begin{subfigure}[h]{0.24\textwidth}
\centering
\includegraphics[width=\textwidth]{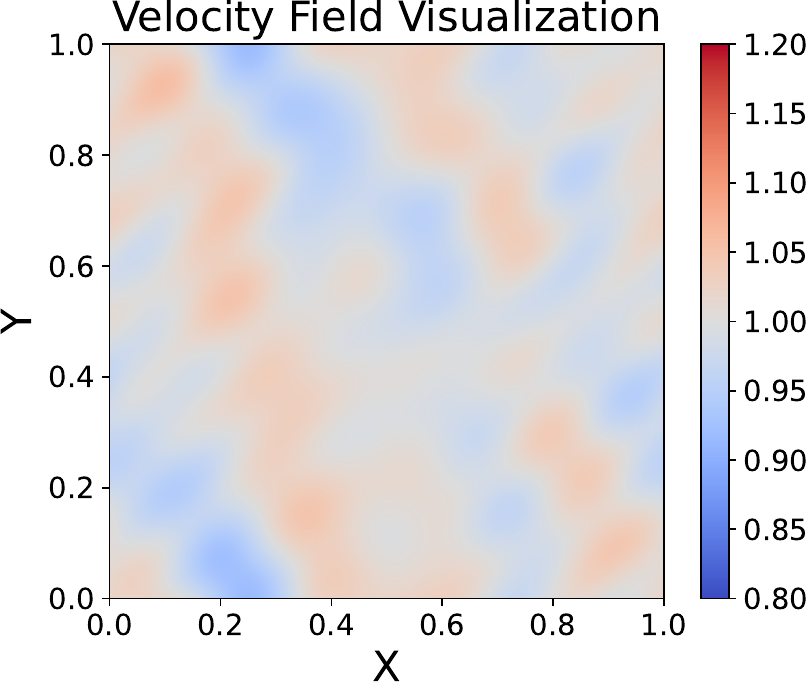} 
\caption{Small Disturbance}
\end{subfigure}
\begin{subfigure}[h]{0.24\textwidth}
\centering
\includegraphics[width=\textwidth]{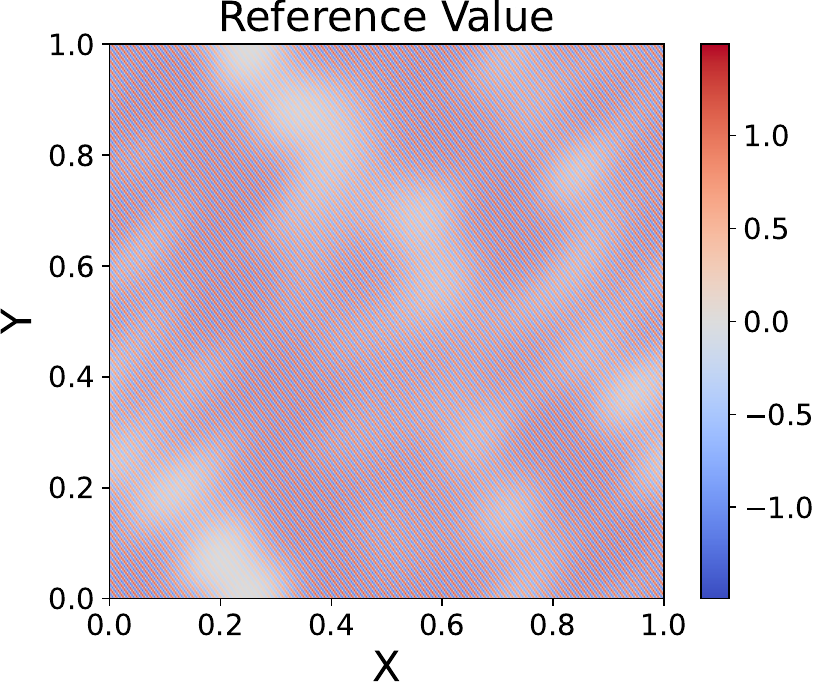} 
\caption{Reference (Small)}
\end{subfigure}
\hfill
\begin{subfigure}[h]{0.24\textwidth}
\centering
\includegraphics[width=\textwidth]{figures/disturbance_0.05_reconstructed_value.pdf} 
\caption{WFP Prediction (Small)}
\end{subfigure}
\hfill
\begin{subfigure}[h]{0.24\textwidth}
\centering
\includegraphics[width=\textwidth]{figures/disturbance_0.05_reconstructed_value_error.pdf} 
\caption{Error (Small)}
\end{subfigure}
\vspace{1em}

\begin{subfigure}[h]{0.24\textwidth}
\centering
\includegraphics[width=\textwidth]{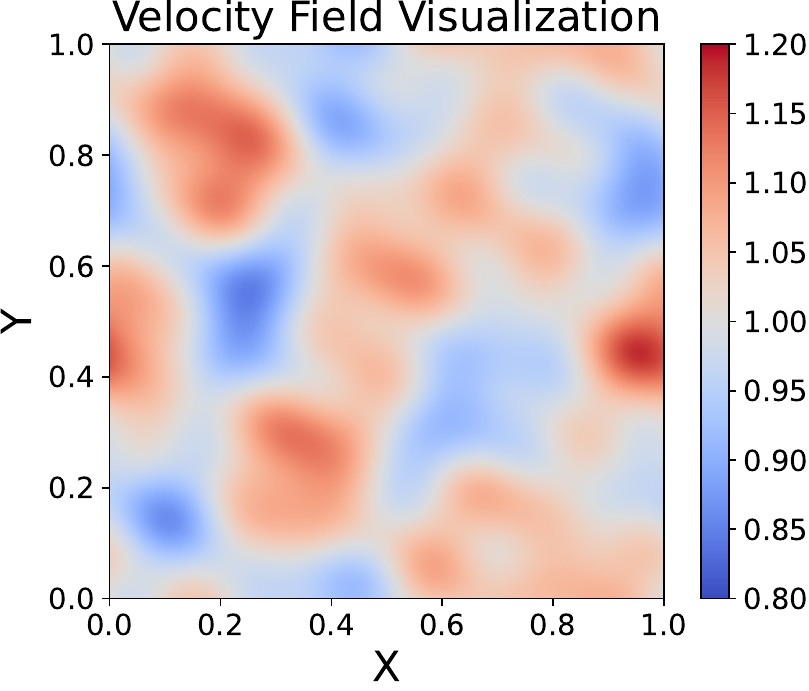} 
\caption{Medium Disturbance}
\end{subfigure}
\begin{subfigure}[h]{0.24\textwidth}
\centering
\includegraphics[width=\textwidth]{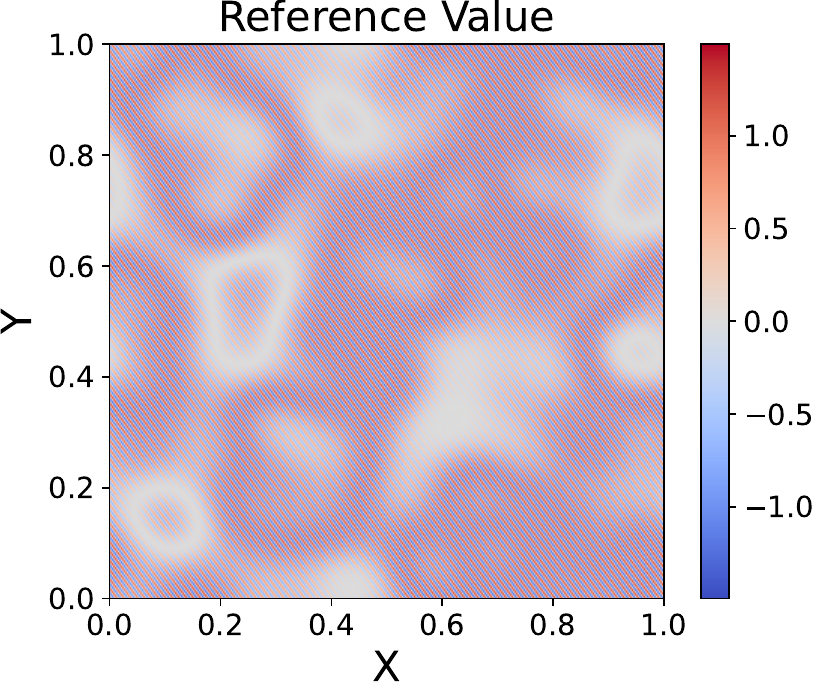} 
\caption{Reference (Medium)}
\end{subfigure}
\hfill
\begin{subfigure}[h]{0.24\textwidth}
\centering
\includegraphics[width=\textwidth]{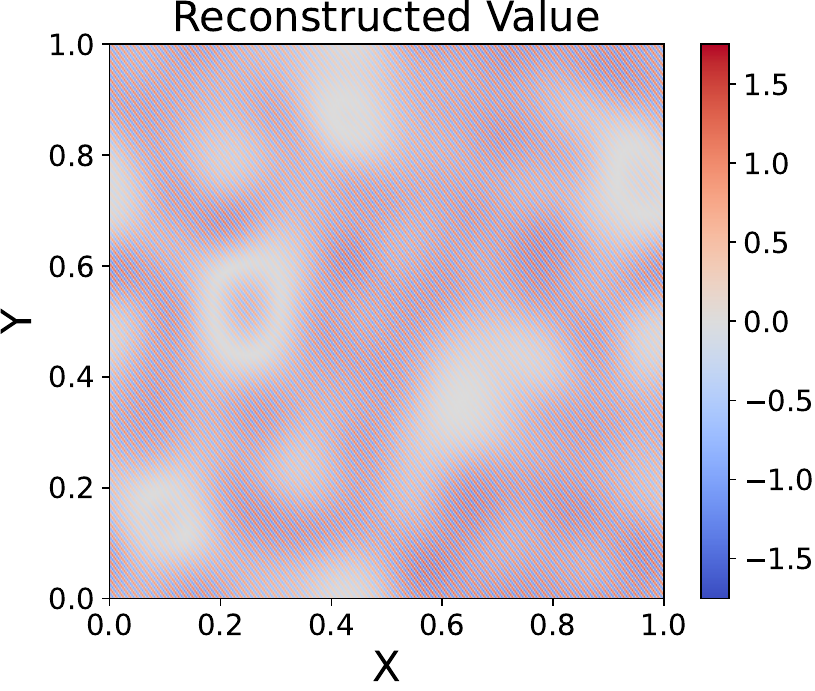} 
\caption{WFP Prediction (Medium)}
\end{subfigure}
\hfill
\begin{subfigure}[h]{0.24\textwidth}
\centering
\includegraphics[width=\textwidth]{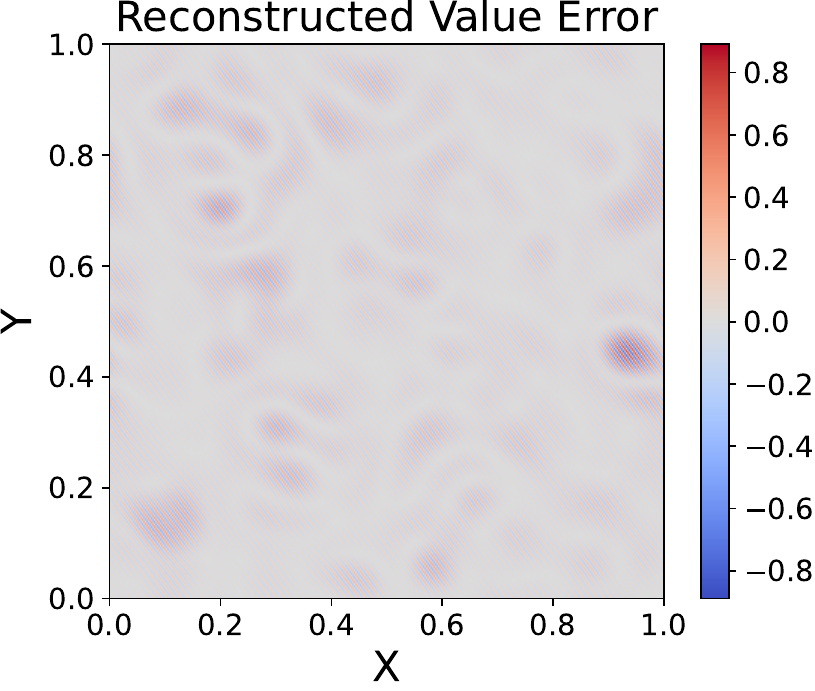} 
\caption{Error (Medium)}
\end{subfigure}
\vspace{1em}

\begin{subfigure}[h]{0.24\textwidth}
\centering
\includegraphics[width=\textwidth]{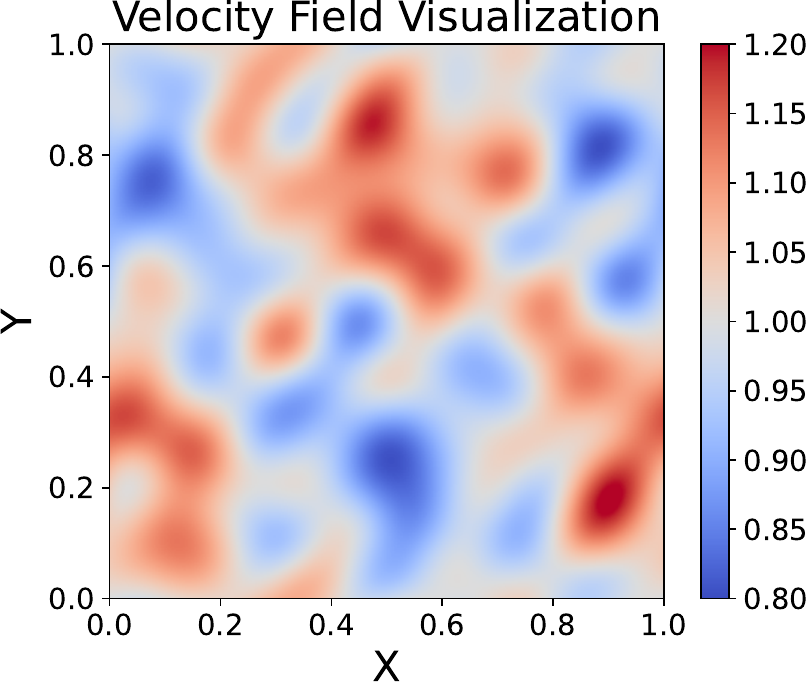} 
\caption{Large Disturbance}
\end{subfigure}
\begin{subfigure}[h]{0.24\textwidth}
\centering
\includegraphics[width=\textwidth]{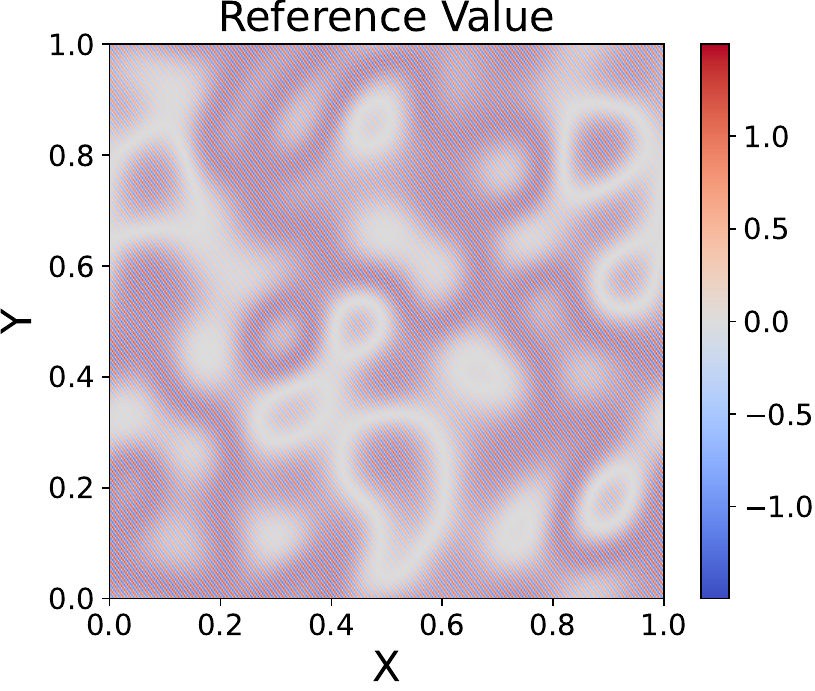} 
\caption{Reference (Large)}
\end{subfigure}
\hfill
\begin{subfigure}[h]{0.24\textwidth}
\centering
\includegraphics[width=\textwidth]{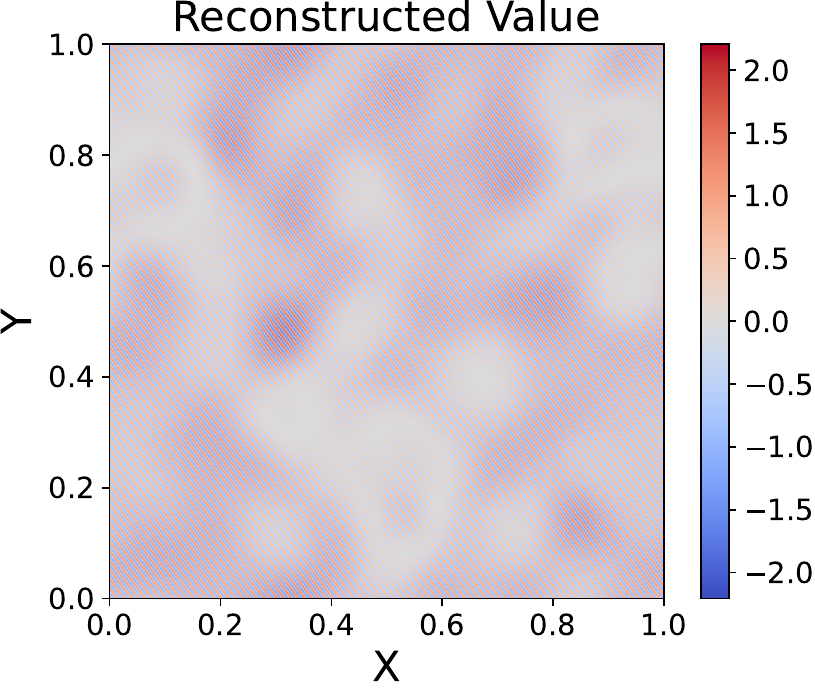} 
\caption{WFP Prediction (Large)}
\end{subfigure}
\hfill
\begin{subfigure}[h]{0.24\textwidth}
\centering
\includegraphics[width=\textwidth]{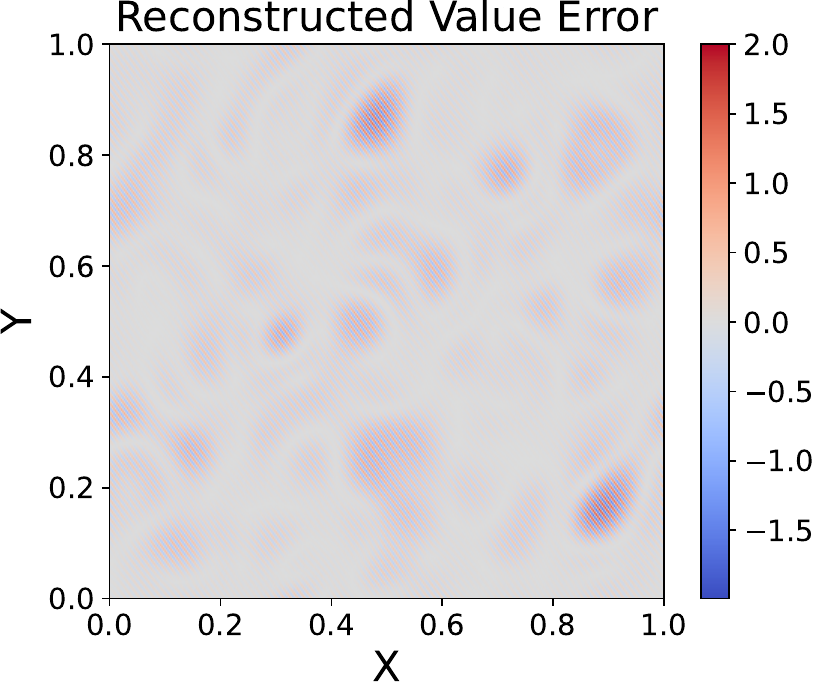} 
\caption{Error (Large)}
\end{subfigure}


\caption{OOD Disturbance Generalization: Performance on velocity fields with perturbation strengths beyond the training distribution. The model shows graceful and predictable degradation as the input moves further from the training data.}
\label{fig:stronger_perturbation_ood}
\end{figure}
\begin{figure}[h]
	\centering
	\includegraphics[width=0.8\textwidth]{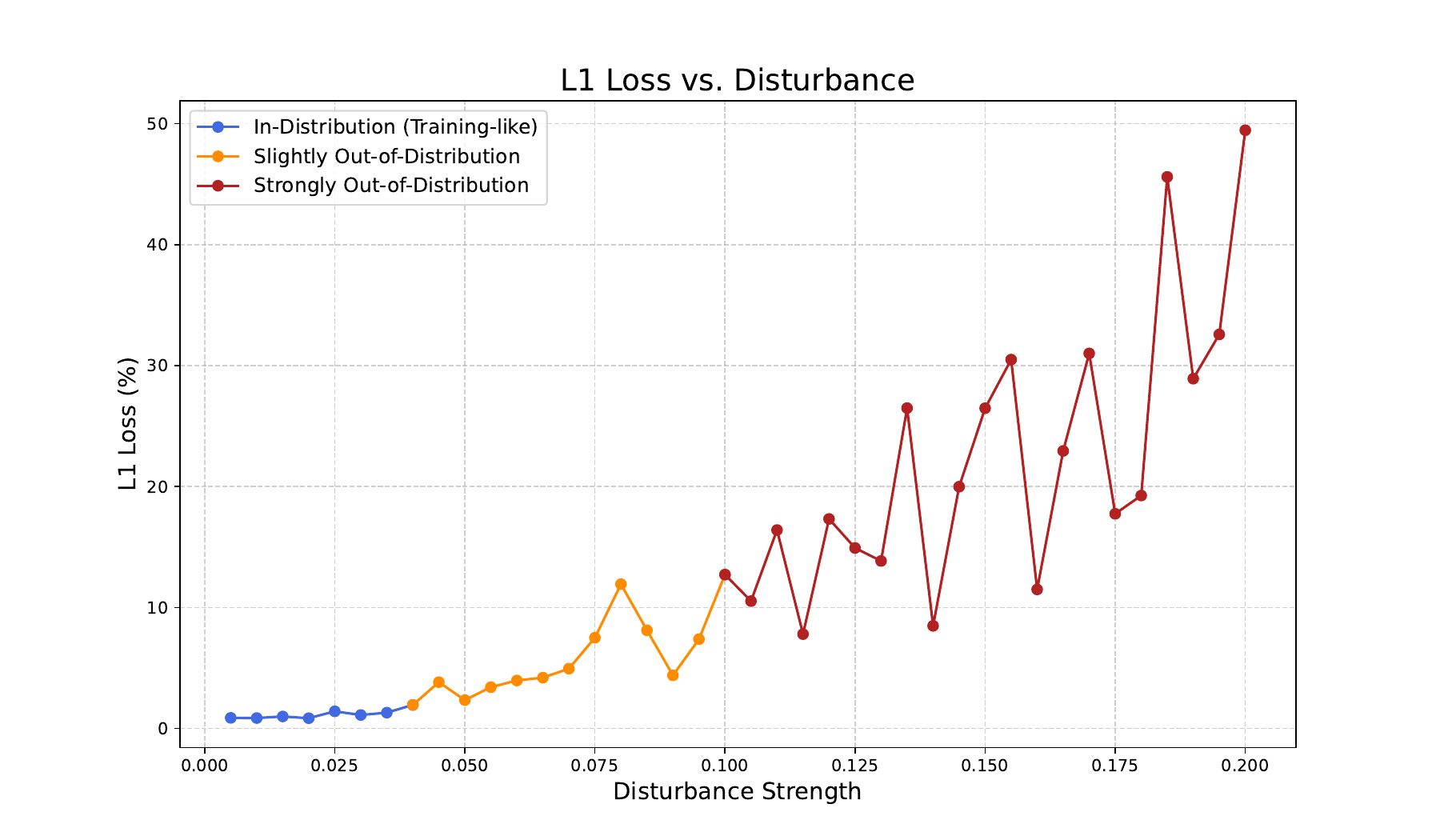} 
	\caption{L1 Error vs. Disturbance Strength: Error increases predictably as the perturbation strength exceeds the training distribution, demonstrating the model's limited but reliable extrapolation capabilities.}
	\label{fig:perturbation_error_curve}
\end{figure}

\subsection{Ablation Study}\label{ablation_study}

\textbf{Activation Choice.}
One of the key factors contributing to the strong fitting performance 
and generalization capability is careful hyperparameter tuning, 

For this specific task, 
commonly used activation functions such 
as $\mathrm{ReLU}$ and $\tanh$ 
still have room for improvement. 
An activation function incorporating 
trigonometric terms was adopted, 
which led to significantly improved performance,
as some related work has shown~\cite{sitzmann2020implicit}. 
This outcome can be attributed to the inherent 
periodic variation of the numerical characteristics 
in wave equations with respect to frequency, 
a property that conventional activation functions 
are ill-suited to capture effectively, as demonstrated in~\cite{tancik2020fourier,xu2020frequency}.

\textbf{Impact of Activation Function Selection on Generalization and Error:}
The choice of activation function for the main processing branch of our network is one contributing factor to its ability to learn the complex, oscillatory dynamics of the wave equation. 
We conducted an ablation study (Table~\ref{tab:activation_ablation}) 
to compare the performance of standard activation functions 
against a custom, physics-inspired alternative. 
See detailed loss curves in Figure~\ref{fig:activation_function} 
in Appendix~\ref{appendix:activation_functions}.

\begin{table}[h]
	\begin{center}
		\caption{Ablation Study: Impact of Activation Function on Training and Test Loss.\label{tab:activation_ablation}}
		\begin{tabular}{lcc}
			\multicolumn{1}{c}{\bf Activation Function} & \multicolumn{1}{c}{\bf Training Loss (MSE)} & \multicolumn{1}{c}{\bf Test Loss (MSE)} \\ \hline
			$\tanh$ & $1 \times 10^{-4}$ & $2 \times 10^{-4}$ \\
			Leaky ReLU (slope=0.05) & $4 \times 10^{-5}$ & $2 \times 10^{-4}$ \\
			$\exp(-4x^2)\sin(3\pi x)$ & $5 \times 10^{-5}$ & $6 \times 10^{-5}$ \\
			$\exp(-20x^2)\sin(10\pi x)$  & $\mathbf{4 \times 10^{-6}}$ & $\mathbf{8 \times 10^{-6}}$ \\
		\end{tabular}
	\end{center}
\end{table}

\section{Conclusion and Discussion}\label{sec:conclusion}

This paper introduces the Windowed Fourier Propagator (WFP), 
a novel deep learning framework designed to overcome the challenges of solving 
high-frequency wave equations in inhomogeneous media. 
By synergizing fundamental principles of wave physics with a specialized neural operator architecture, 
the WFP establishes a new paradigm for efficient and accurate wave propagation modeling. 
Our mathematical analysis and extensive numerical validation have demonstrated that the WFP not only achieves exceptional computational efficiency but also exhibits robust generalization capabilities.

The core contributions of this work are rooted in its physics-informed design. 
First, we leveraged the principle of frequency locality to construct a computationally 
tractable yet expressive model that captures essential wave spreading phenomena 
without incurring the quadratic complexity of dense interaction models. 
Second, by explicitly preserving the superposition principle, 
our architecture demonstrates remarkable generalization to arbitrary initial conditions, 
even when trained on a simple basis of plane waves. 
Third, the WFP's operation in the frequency domain ensures resolution-invariance, 
making it a versatile tool for multi-resolution analysis.

The effectiveness of this framework translates directly into significant practical advantages. 
The WFP drastically accelerates the solution of forward wave problems and enhances the efficiency 
of gradient-based optimization for inverse problems. 
In conclusion, by demonstrating the value of equation-specific network design, 
this work paves the way for a new class of resource-efficient and physically-consistent 
machine learning applications for complex PDE systems.

\section*{Acknowledgments}
We thank Shanghai Institute for Mathematics and Interdisciplinary Sciences (SIMIS) for their financial support. This research was funded by SIMIS under grant number SIMIS-ID-2024-(CN). The authors are grateful for the resources and facilities provided by SIMIS, which were essential for the completion of this work.

\appendix
\section{Neural Network Architecture Details}\label{appendix:nn_operations}
This appendix details the components of the Gated Neural Network module used for learning the frequency propagator.

\subsection{Fundamental Neural Network Operations}
Our architecture is constructed from three standard operations:
\begin{enumerate}
	\item \textbf{Linear Transformation}: A mapping $\mathbf{y} = \mathbf{W}\mathbf{x} + \mathbf{b}$, denoted $\mathrm{Linear}(h_{in}, h_{out})$, where $\mathbf{W}$ and $\mathbf{b}$ are learnable parameters. $(h_{in}, h_{out})$ are the input and output dimensions, respectively.
	\item \textbf{Activation Function ($\sigma(\cdot)$)}: A non-linear function applied element-wise. We use two types: the standard \textbf{Sigmoid Function} $\sigma_{gate}(x) = (1 + e^{-x})^{-1}$ in the gate branch, and a \textbf{Custom Trigonometric Activation} $\sigma_{main}(x) = A \exp(-Bx^2)\sin(Cx)$ in the main processing path, as detailed in Section~\ref{ablation_study}.
	\item \textbf{Element-wise Product ($\odot$)}: The standard element-wise multiplication of two vectors.
\end{enumerate}

\subsection{The Gated Neural Network Module}
The core of our model is a Gated Neural Network (Figure~\ref{fig:gate_structure}) that maps an input token $\mathbf{z}_{in} \in \mathbb{R}^{h_1}$ to a predicted frequency evolution dictionary. The input token concatenates a vector representing the initial frequency mode with the feature vector extracted from the velocity field. The token is processed through two parallel branches:

\begin{enumerate}
	\item \textbf{Main Branch}: The input $\mathbf{z}_{in}$ is passed through two linear layers interspersed with our custom activation function $\sigma_{main}$ to produce the main output $\mathbf{o}_{\text{main}}$.
	\[
	\mathbf{a}_{\text{hidden}} = \sigma_{main}(\mathrm{Linear}_1(\mathbf{z}_{in})) \quad \implies \quad \mathbf{o}_{\text{main}} = \mathrm{Linear}_2(\mathbf{a}_{\text{hidden}})
	\]
	\item \textbf{Gate Branch}: The same input $\mathbf{z}_{in}$ is passed through a single linear layer followed by the sigmoid activation $\sigma_{gate}$ to produce a modulation vector $\mathbf{o}_{\text{gate}}$.
	\[
	\mathbf{o}_{\text{gate}} = \sigma_{gate}(\mathrm{Linear}_3(\mathbf{z}_{in}))
	\]
	\item \textbf{Gating Operation}: The final output is the element-wise product of the two branches:
	\[
	\mathbf{y}_{\text{final}} = \mathbf{o}_{\text{main}} \odot \mathbf{o}_{\text{gate}}
	\]
\end{enumerate}
This gated mechanism allows the network to learn a data-driven filter, selectively amplifying or suppressing components of the learned transformation. This enforces a soft sparsity, focusing the model on the most physically relevant features of the wave propagation problem.

\section{Training details and Loss Curves}\label{appendix:loss_curves}
\begin{figure}[h]
	\centering
	\includegraphics[width=0.5\textwidth]{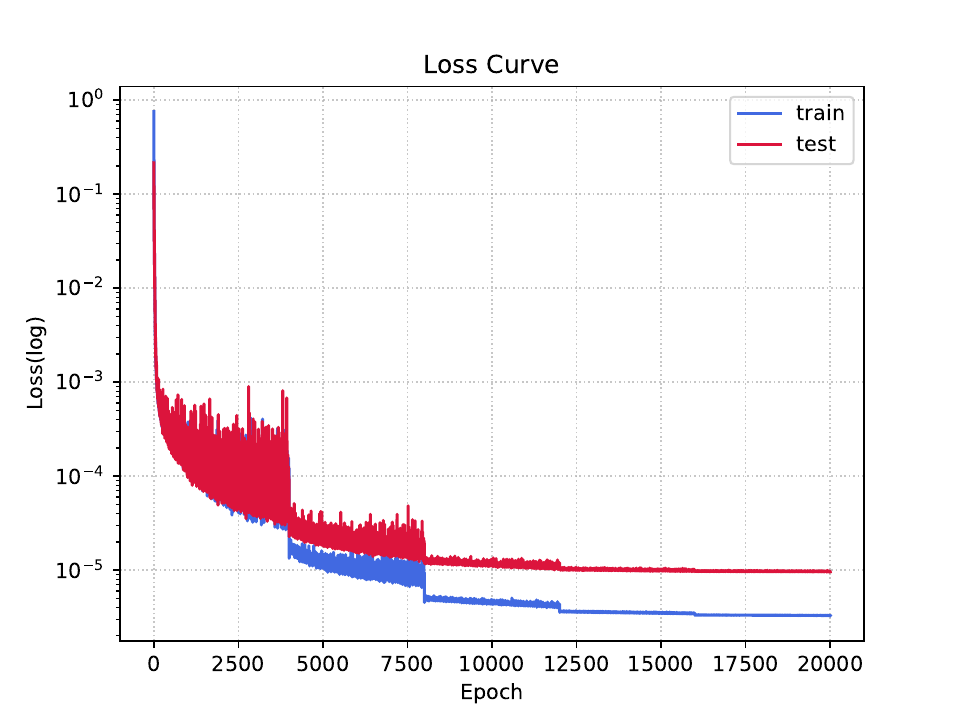}
	
	\raggedright
	
	\textbf{Network Training Loss Visualization.}
	Training and testing loss curves are plotted for the 1D wave equation.
	\caption{Visualization of 1D Network Training Results}
	\label{fig:training_loss}
\end{figure}

\subsection{Wave Equation Training Details.}
For numerical experiments, 
the training set comprises $10,000$ samples for 1D situation and $100,000$ samples for 2D situation,
with a test set of $200$ samples. 
The error metric (y-axis) is quantified using Mean Square Error (MSE). 
Training is conducted with a batch size of $100$ 
over $20,000$ epochs.
An initial learning rate of $10^{-3}$ is implemented, 
incorporating a decay factor of $0.1$ every $4,000$ steps,
and the network architecture includes a hidden layer with $6,000$ neurons for 
1D situation and $8,000$ neurons for 2D situation.
The activation function is defined as 
$\exp(-20x^2)\sin(10\pi x)$.
In the FFT image segmentation of initial values,
one element is selected each time,
combined with $10^d$ velocity main frequencies,
to output $15^d$ surrounding frequencies at $t=0.02$ seconds.
No additional regularization techniques are employed.

\subsection{Ablation Study on Activation Functions.}\label{appendix:activation_functions}

\begin{figure}[!h]
	\centering
	\begin{subfigure}[htbp]{0.45\textwidth}
		\centering
		\includegraphics[width=\textwidth]{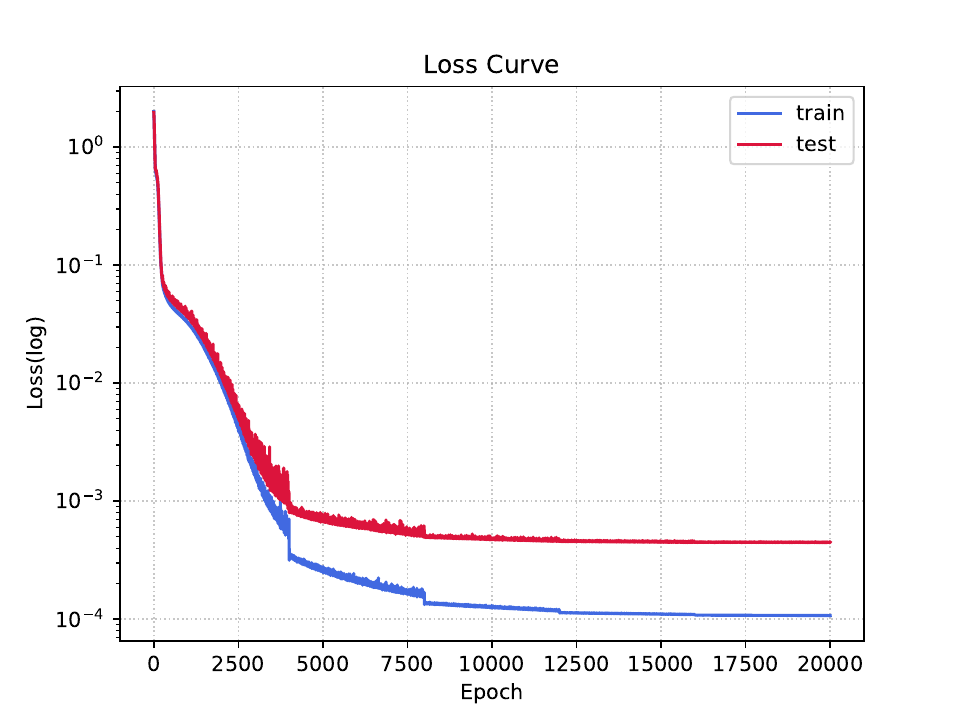}
		\caption{$\tanh(|x|)$}
		\label{fig:act_tanh}
	\end{subfigure}
	\hfill
	\begin{subfigure}[htbp]{0.45\textwidth}
		\centering
		\includegraphics[width=\textwidth]{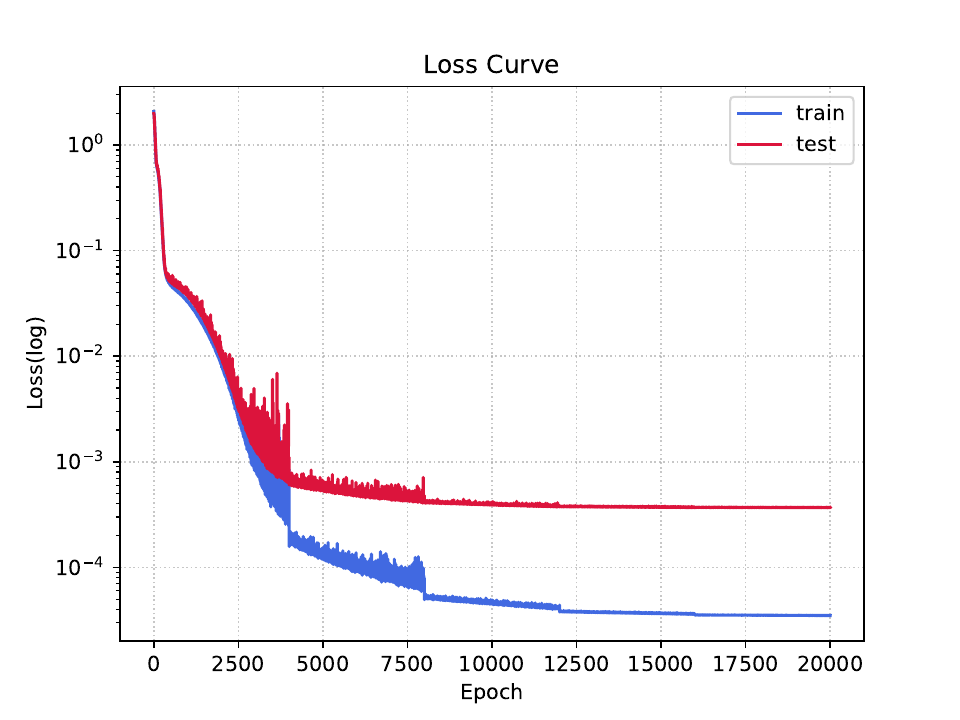}
		\caption{Leaky$(0.05)$ReLU(x)}
		\label{fig:act_relu}
	\end{subfigure}
	\vskip\baselineskip
	\begin{subfigure}[htbp]{0.45\textwidth}
		\centering
		\includegraphics[width=\textwidth]{figures/training_loss_3.pdf}
		\caption{$\exp(-20x^2)\sin(10\pi x)$}
		\label{fig:act_exp20}
	\end{subfigure}
	\hfill
	\begin{subfigure}[htbp]{0.45\textwidth}
		\centering
		\includegraphics[width=\textwidth]{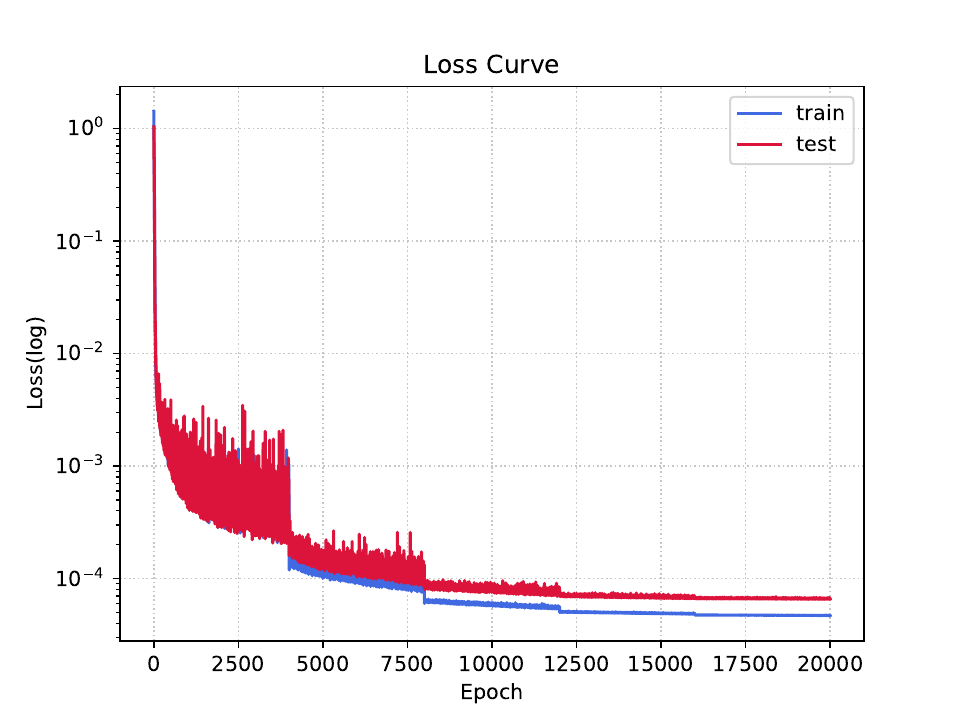}
		\caption{$\exp(-4x^2)\sin(3\pi x)$}
		\label{fig:act_exp4}
	\end{subfigure}
	
	\raggedright
	
	\caption{Effect of Different Activation Functions on Network Training Results}
	\label{fig:activation_function}
\end{figure}

Our experiments show that while conventional activation functions such as 
$\mathrm{\tanh}$ (Figure~\ref{fig:act_tanh}) and $\mathrm{Leaky ReLU}$ 
(Figure~\ref{fig:act_relu}) provide a reasonable performance baseline, 
we found that accuracy could be significantly enhanced by employing a
custom activation function tailored to the problem's oscillatory nature.

To better align the network's inductive bias with the 
problem's oscillatory nature, we introduced a custom activation 
function of the form $\exp(-Ax^2)\sin(Bx)$. 
Through hyperparameter search, we identified an optimal set of parameters,
yielding the function $\exp(-20x^2)\sin(10\pi x)$, 
which demonstrates superior performance 
(Figure~\ref{fig:act_exp20}). 
This specifically tuned function achieves the lowest 
error rates and exhibits the best generalization 
capabilities among all tested configurations.

Furthermore, we observed that even with a non-optimal parameter set, 
such as in the function 
$\exp(-4x^2)\sin(3\pi x)$ (Figure~\ref{fig:act_exp4}), 
or set $A,B$ as trainable parameters,
our custom activation function architecture still consistently outperforms both 
$\mathrm{\tanh}$ and $\mathrm{Leaky ReLU}$. 
This indicates that the structural form of combining a sinusoidal 
component with a Gaussian envelope is inherently better suited for this class of problems, 
providing a robust performance advantage over standard activation functions, 
with careful parameter tuning offering further significant improvements.

\subsection{Benchmark Tests}
There are two primary FNO structures, 
distinguished by the form of the $R$ matrix. 
The $R$ matrix can be implemented either as an elementwise operation 
or as a linear transformation.
We set the frequency threshold to $128$ in 
the one-dimensional FNO, as the highest frequency of input data is $96$ ($\sin(2\pi\times 96x)$).

\begin{itemize}
	\item \textbf{Elementwise (or Diagonal) $R$:} In this configuration, 
	$R$ is a diagonal matrix 
	(or, equivalently, a vector that multiplies the Fourier coefficient vector elementwise). 
	The number of learnable parameters is proportional to the number of retained Fourier modes.
	\item \textbf{Linear Transformation (or Dense) $R$:} Here, $R$ is a fully connected linear transformation matrix,
	enabling more complex interactions among different frequency modes. 
	The number of parameters scales quadratically with the number of retained modes.
\end{itemize}

For illustration, consider the one-dimensional case where $100$ effective frequencies are retained. 
The elementwise $R$ comprises $100$ learnable complex parameters, 
whereas the linear transformation $R$ consists of a $100 \times 100$ complex-valued matrix, 
totaling $10,000$ parameters.

A notable challenge arises from the curse of dimensionality: in higher-dimensional settings, 
the parameter count for the linear transformation matrix becomes prohibitively large, 
rendering training infeasible. 
For instance, retaining $100 \times 100$ frequencies in two dimensions results in 
a linear transformation matrix with $(100 \times 100)^2$ parameters,
which becomes infeasible for training.

Consequently, comparative experiments were conducted exclusively in the one-dimensional setting, 
evaluating both FNO with elementwise $R$ and linear $R$.

\begin{table}
	\begin{center}

		\caption{Comparison of training and test loss for different 
			models on the 1D wave equation.\label{tab:1d_comparison}}
		
		\begin{tabular}{lcc}
			\multicolumn{1}{c}{\bf MODEL}  & \multicolumn{1}{c}{\bf Training Loss (MSE)} & \multicolumn{1}{c}{\bf Test Loss (MSE)} \\ \hline
			FNO (Elementwise $R$) & $4 \times 10^{-4}$ & $4 \times 10^{-4}$ \\
			FNO (Linear $R$)      & $6 \times 10^{-5}$ & $7 \times 10^{-5}$ \\
			\textbf{WFP (Ours)}   & $\mathbf{4 \times 10^{-6}}$ & $\mathbf{8 \times 10^{-6}}$ \\
		\end{tabular}
	\end{center}
	
\end{table}
\printcredits
\newpage
\bibliographystyle{cas-model2-names}

\bibliography{refs}



\end{document}